\providecommand{\tabularnewline}{\\}
\newtheorem{lemma}{Lemma}[section]
\newtheorem{definition}[lemma]{Definition}
\newtheorem{remark}{Remark}
\algnewcommand{\LineComment}[1]{\State \(\triangleright\) #1}
\definecolor{BLUE}{RGB}{0,0,255}
\begin{document}
\global\long\def\dq#1{\underline{\boldsymbol{#1}}}%

\global\long\def\quat#1{\boldsymbol{#1}}%

\global\long\def\mymatrix#1{\boldsymbol{#1}}%

\global\long\def\myvec#1{\boldsymbol{#1}}%

\global\long\def\mapvec#1{\boldsymbol{#1}}%

\global\long\def\dualvector#1{\underline{\boldsymbol{#1}}}%

\global\long\def\dual{\varepsilon}%

\global\long\def\dotproduct#1{\langle#1\rangle}%

\global\long\def\norm#1{\left\Vert #1\right\Vert }%

\global\long\def\mydual#1{\underline{#1}}%

\global\long\def\getp#1{\operatorname{\mathcal{P}}\left(#1\right)}%

\global\long\def\getd#1{\operatorname{\mathcal{D}}\left(#1\right)}%

\global\long\def\hamilton#1#2{\overset{#1}{\operatorname{\mymatrix H}}\left(#2\right)}%

\global\long\def\hamiquat#1#2{\overset{#1}{\operatorname{\mymatrix H}}_{4}\left(#2\right)}%

\global\long\def\hami#1{\overset{#1}{\operatorname{\mymatrix H}}}%

\global\long\def\tplus{\dq{{\cal T}}}%

\global\long\def\swap#1{\text{swap}\{#1\}}%

\global\long\def\imi{\hat{\imath}}%

\global\long\def\imj{\hat{\jmath}}%

\global\long\def\imk{\hat{k}}%

\global\long\def\real#1{\operatorname{\mathrm{Re}}\left(#1\right)}%

\global\long\def\imag#1{\operatorname{\mathrm{Im}}\left(#1\right)}%

\global\long\def\imvec{\boldsymbol{\imath}}%

\global\long\def\vector{\operatorname{vec}}%

\global\long\def\mathpzc#1{\fontmathpzc{#1}}%

\global\long\def\cost#1#2{\underset{\text{#2}}{\operatorname{\mathfrak{C}}}\left(\ensuremath{#1}\right)}%

\global\long\def\diag#1{\operatorname{diag}\left(#1\right)}%

\global\long\def\frame#1{\mathcal{F}_{#1}}%

\global\long\def\ad#1#2{\mathrm{Ad}\left(#1\right)#2}%

\global\long\def\spin{\text{Spin}(3)}%

\global\long\def\spinr{\text{Spin}(3){\ltimes}\mathbb{R}^{3}}%

\global\long\def\spinrlie{\text{spin}(3){\ltimes}\mathbb{R}^{3}}%

\global\long\def\norm#1{\left\Vert #1\right\Vert }%

\global\long\def\minim#1#2{ \begin{aligned} &  \underset{#1}{\min}  &   &  #2 \end{aligned}
 }%

\global\long\def\minimone#1#2#3{ \begin{aligned} &  \underset{#1}{\min}  &   &  #2 \\
  &  \text{subject to}  &   &  #3 
\end{aligned}
 }%

\global\long\def\minimtwo#1#2#3#4{ \begin{aligned} &  \underset{#1}{\min}  &   &  #2 \\
  &  \text{subject to}  &   &  #3 \\
  &   &   &  #4 
\end{aligned}
 }%

\global\long\def\minimtwom#1#2#3#4{ \begin{aligned} &  \underset{#1}{\min}  &   &  #2 \\
  &  \text{subject to}  &   &  #3 \\
  &   &   &  #4 
\end{aligned}
 }%

\global\long\def\minimthree#1#2#3#4#5{ \begin{aligned} &  \underset{#1}{\min}  &   &  #2 \\
  &  \text{subject to}  &   &  #3 \\
  &   &   &  #4 \\
  &   &   &  #5 
\end{aligned}
 }%

\global\long\def\argmin#1#2#3#4{ \begin{aligned}#4  &  \underset{#1}{\arg\min}  &   &  #2\\
  &  \text{subject to}  &   &  #3 
\end{aligned}
 }%

\title{Dynamics of Mobile Manipulators using Dual Quaternion Algebra}
\author{$\begin{array}{cc}
\text{\textbf{Frederico} \textbf{Fernandes} \textbf{Afonso} \textbf{Silva},}^{\dagger}\,\text{\textbf{Juan} \textbf{Jos}}\acute{\text{\textbf{e}}}\,\text{\textbf{Quiroz}-\textbf{Oma}}\tilde{\text{\textbf{n}}}\text{\textbf{a},}^{\ddagger} & \quad\text{\textbf{Bruno} \textbf{Vilhena} \textbf{Adorno}}\\
\text{Graduate Program in Electrical Engineering} & \quad\text{Department of Electrical and Electronic Engineering}\\
\text{Federal University of Minas Gerais (UFMG)} & \quad\text{The University of Manchester}\\
\text{Av. Ant\^{o}nio Carlos 6627, 31270-901, Belo Horizonte-MG, Brazil} & \quad\text{Sackville Street, Manchester M13 9PL, United Kingdom}\\
\text{Email: fredf.afonso@gmail.com,}^{\dagger}\,\text{juanjqogm@gmail.com}^{\ddagger} & \quad\text{Email: bruno.adorno@manchester.ac.uk}
\end{array}$}
\maketitle
\begin{abstract}
This paper presents two approaches to obtain the dynamical equations
of mobile manipulators using dual quaternion algebra. The first one
is based on a general recursive Newton-Euler formulation and uses
twists and wrenches, which are propagated through high-level algebraic
operations and works for any type of joints and arbitrary parameterizations.
The second approach is based on Gauss's Principle of Least Constraint
(GPLC) and includes arbitrary equality constraints. In addition to
showing the connections of GPLC with Gibbs-Appell and Kane's equations,
we use it to model a nonholonomic mobile manipulator. Our current
formulations are more general than their counterparts in the state
of the art, although GPLC is more computationally expensive, and simulation
results show that they are as accurate as the classic recursive Newton-Euler
algorithm.

\textbf{Keywords: }Mobile Manipulator Dynamics, Dual Quaternion Algebra,
Newton-Euler Model, Gauss's Principle of Least Constraint, Euler-Lagrange
Equations, Gibbs-Appell Equations, Kane's Equations.
\end{abstract}

\section{Introduction}

In the last thirty years, there have been an expressive amount of
papers dealing with different representations for robot modeling.
Notorious examples can be found in the works of Featherstone \cite{Featherstone2008Book,Featherstone2010,Featherstone2010a},
McCarthy \cite{McCarthyBook1990,Dooley1991,Perez2004}, Selig \cite{Selig2004,Selig2005},
and Bayro-Corrochano \cite{Selig2010}, among many others.

One of the reasons for such investigations is that the complexity
of a robotic system goes far beyond the complexity of the mechanism
itself. A typical robotic system involves motion/force/impedance control,
path planning, task planning, and many more higher-level layers. Therefore,
representations that are very useful for robot modeling, such as homogeneous
transformation matrices, not necessarily are easy to use when performing
pose control or impedance control, for example \cite{Yuan1988}. This
is precisely the reason why it is common to use homogeneous transformation
matrices to obtain the robot kinematics but then indirectly find the
geometric Jacobian and, finally, to use quaternions and position vectors
to perform pose control in the task-space \cite{Xian2004}. There
are several drawbacks in using the aforementioned strategy. The mix
of different representations unnecessarily complicates the overall
representation and the mapping between those different representations
usually introduces mathematical artifacts, such as algorithmic singularities
and discontinuities.

In contrast, elements of dual quaternion algebra have strong geometrical
meaning, such as in screw theory, and are also represented as coupled
entities within single elements. In kinematics, this representation
has been extensively explored to obtain the robot kinematics and differential
kinematics \cite{Perez2004,Adorno2011,Gouasmi2012,Cohen2016,Ozgur2016,Kong2017,Dantam2020}.
Furthermore, in recent works, dual quaternions have been used to perform
admittance control \cite{Fonseca2020}, which is fundamental in physical
human-robot interaction; constrained motion control \cite{Marinho2019,Quiroz-Omana2019},
which takes into account geometrical constraints imposed by the workspace;
hybrid control, which takes into account the topology of the space
of rigid motions \cite{Kussaba2017} and optimal control, which uses
a linear-quadratic optimal tracking controller for robotic manipulators
\cite{Marinho2015}; distributed pose formation control \cite{Savino2020}
and cooperative manipulation \cite{Adorno2010,Figueredo2014IROS},
including the ones that involve human-robot collaboration \cite{Adorno2015};
and to define high-level geometrical tasks \cite{Lana2015}.

Furthermore, elements such as unit dual quaternions and pure dual
quaternions, when equipped with standard multiplication and addition
operations, form Lie groups with associated Lie algebras. Therefore,
a formulation based on dual quaternion algebra offers the geometrical
insights of screw theory, the rigor of Lie Algebra, and a simple algebraic
treatment of the dynamical model as in the spatial algebra \cite{Featherstone2008Book},
often reducing the necessity of an extensive geometric analysis of
the mechanism, which contrasts with approaches based on the matrix
representation of screw theory \cite{Huang2015,Renda2017}.

Some works have used dual quaternion algebra to describe rigid body
dynamics over the last decades \cite{Yang1964,Yang1966,Yang1967,Yang1971},
although not necessarily creating a general formalism for multibody
system analysis. Among the works that have sought such formalism,
most are based on three-dimensional dual vectors and demand some mapping
to higher dimensional vectors to obtain the system dynamic equations
\cite{Pennock1983,Dooley1991,Shoham1993,Valverde2018a}, therefore
losing the elegance and compactness of an analysis based only on dual
quaternion algebra and, at times, incurring in abuses of notation
\cite{Dooley1991} or demanding artificial swaps on the vectors \cite{Valverde2018}
to deal with the mixing of representations. Other works focused on
the propagation of dual quaternions \cite{Hachicho2000} or the computational
aspects of algorithms based on dual quaternion algebra \cite{MirandadeFarias2019},
rather than on the algebraic and geometrical insights that the algebra
provides when dealing with more complex robots (e.g., nonholonomic
mobile manipulators) and more general types of joints (e.g., helical,
cylindrical, $6$-DoF, etc.).

In conclusion, since there is no method based on dual quaternion algebra
that adequately encompasses the dynamic model of mobile manipulators
and general types of joints, there is still a theoretical gap that
creates an unnecessary need for intermediate mappings when using higher-level
algorithms based on dual quaternions to connect them to the low-level
dynamic model. The purpose of this paper is to fill that gap by proposing
a suitable dynamic model of mobile manipulators with arbitrary joints
using dual quaternion algebra.

\subsection{Statement of contributions}

This paper presents two approaches to obtain the dynamical equations
of serial manipulators using dual quaternion algebra. The first one
is based on the the recursive Newton-Euler formulation and the second
one applies the Gauss's Principle of Least Constraint to obtain the
dynamical model of a serial mobile manipulators subject to nonholonomic
constraints. The contributions of this paper to the state-of-the-art
are the following:
\begin{enumerate}
\item A systematic procedure to obtain the recursive equations for the dynamic
model of mobile manipulators using dual quaternion algebra and the
Newton-Euler formalism, which has linear cost on the number of links.
This approach simplifies the classic procedure by removing the necessity
of exhaustive geometrical analyses because wrenches and twists are
propagated through high-level algebraic operations. Compared to previous
works, our approach is more general because it works for arbitrary
types of joints and we do not impose any particular parameterization
convention for the propagation of twists;
\item A closed-form for the dynamic model of serial manipulators based on
the Gauss's Principle of Least Constraint (GPLC) and dual quaternion
algebra. We impose additional constraints in the GPLC formulation
to model nonholonomic mobile manipulators. We apply the fundamental
equation proposed by Udwadia-Kalaba \cite{FirdausE.UdwadiaandRobertE.Kalaba1992},
which employs a simpler method, albeit equivalent, than Lagrange multipliers
to enforce the equality constraints. In addition, we present the skew
symmetry property related to the inertia and Coriolis matrices, which
is paramount when designing passivity-based controllers. Finally,
we show the connections of the Gauss's Principle of Least Constraint
with the Gibbs-Appell and Kane's equations using our formulation based
on dual quaternion algebra.
\end{enumerate}
We validate the proposed algorithms in simulation using three different
robots: a fixed-base $50$-DoF serial manipulator, a $9$-DoF holonomic
mobile manipulator, and an $8$-DoF nonholonomic mobile manipulator.
Moreover, we compare our results with the ones provided by a realistic
simulator, and with an implementation of the state of the art. Furthermore,
we present the computational costs of the proposed methodologies.

This paper is organized as follows: Section~\ref{sec:Mathematical-Preliminaries}
presents a brief mathematical background on dual quaternion algebra;
Section~\ref{sec:Newton-Euler-Model} introduces a general Newton-Euler
formulation based on dual quaternion algebra, whereas Section~\ref{sec:Gauss-Principle}
introduces the dual quaternion formulation based on the Gauss's Principle
of Least Constraints; Section~\ref{sec:Results} presents both the
validation of the proposed methodologies through simulations and their
computational costs; finally, Section~\ref{sec:Conclusions} gives
the final remarks and points to further research directions.

\section{Mathematical Preliminaries \label{sec:Mathematical-Preliminaries}}

Dual quaternions \cite{Selig2005} are elements of the set
\begin{align}
\mathcal{H} & \triangleq\{\quat h_{\mathcal{P}}+\dual\quat h_{\mathcal{D}}\::\:\quat h_{\mathcal{P}},\quat h_{\mathcal{D}}\in\mathbb{H},\,\dual\neq0,\,\dual^{2}=0\},\label{eq:dq_def}
\end{align}
where
\begin{align}
\mathbb{H} & \triangleq\{h_{1}+\imi h_{2}+\imj h_{3}+\imk h_{4}\::\:h_{1},h_{2},h_{3},h_{4}\in\mathbb{R}\}\label{eq:quat_def}
\end{align}
is the set of quaternions, in which $\imi$, $\imj$ and $\imk$ are
imaginary units with the properties $\imi^{2}=\imj^{2}=\imk^{2}=\imi\imj\imk=-1$\cite{Hamilton1844}.
Addition and multiplication of dual quaternions are analogous to their
counterparts of real and complex numbers. One must only respect the
properties of the dual unit $\dual$ and imaginary units $\imi,\imj,\imk$.

Given $\dq h\in\mathcal{H}$ such that
\[
\dq h=\underbrace{h_{1}+\imi h_{2}+\imj h_{3}+\imk h_{4}}_{\quat h_{\mathcal{P}}}+\dual\underbrace{\left(h_{5}+\imi h_{6}+\imj h_{7}+\imk h_{8}\right)}_{\quat h_{\mathcal{D}}},
\]
the operators $\mathcal{P}\left(\dq h\right)\triangleq\quat h_{\mathcal{P}}$
and $\mathcal{D}\left(\dq h\right)\triangleq\quat h_{\mathcal{D}}$
provide the primary part and dual part of $\dq h$, respectively,
whereas the operators $\real{\dq h}\triangleq h_{1}+\dual h_{5}$
and $\imag{\dq h}=\imi h_{2}+\imj h_{3}+\imk h_{4}+\dual\left(\imi h_{6}+\imj h_{7}+\imk h_{8}\right)$
provide the real and the imaginary part of $\dq h$, respectively.
The conjugate of $\dq h$ is defined as $\dq h^{*}\triangleq\real{\dq h}-\imag{\dq h}$
and its norm is given by $\norm{\dq h}=\sqrt{\dq h\dq h^{*}}=\sqrt{\dq h^{*}\dq h}$.

The subset $\dq{\mathcal{S}}=\left\{ \dq h\in\mathcal{H}:\norm{\dq h}=1\right\} $
of unit dual quaternions is used to represent poses (position and
orientation) in the three-dimensional space and form the group $\spinr$
under the multiplication operation.\footnote{The symbol $\ltimes$ represents the semi-direct product between groups
\cite[p. 22]{Selig2005}.} Any $\dq x\in\dq{\mathcal{S}}$ can always be written as $\dq x=\quat r+\dual\left(1/2\right)\quat p\quat r$,
where $\quat p=\imi x+\imj y+\imk z$ represents the position $\left(x,y,z\right)$
in the three-dimensional space and $\quat r=\cos\left(\phi/2\right)+\quat n\sin\left(\phi/2\right)$
represents a rotation, in which $\phi\in[0,2\pi)$ is the rotation
angle around the rotation axis$\quat n\in\mathbb{H}_{p}\cap\mathbb{S}^{3}$,
with $\mathbb{H}_{p}\triangleq\left\{ \quat h\in\mathbb{H}:\real{\quat h}=0\right\} $
and $\mathbb{S}^{3}=\left\{ \quat h\in\mathbb{H}:\norm{\quat h}=1\right\} $\cite{Selig2005}.

Given the set $\mathcal{H}_{p}=\left\{ \dq h\in\mathcal{H}:\real{\dq h}=0\right\} $
of pure dual quaternions, which are used to represent twists and wrenches,
the operator $\mathrm{Ad}:\dq{\mathcal{S}}\times\mathcal{H}_{p}\to\mathcal{H}_{p}$
performs rigid motions on those entities. For instance, given a twist
expressed in frame $\frame a$, namely $\dq{\xi}^{a}\in\mathcal{H}_{p}$,
and the unit dual quaternion $\dq x_{a}^{b}$ that gives the pose
of $\frame a$ with respect to $\frame b$, the same twist is expressed
in frame $\frame b$ as\footnote{Notice that superscripts represent the original frame, and subscripts
represent the modified frame. This convention of subscripts and superscripts
is maintained throughout this paper. If no superscript is used, we
assume the global inertial frame.}
\begin{equation}
\dq{\xi}^{b}=\ad{\dq x_{a}^{b}}{\dq{\xi}^{a}}=\dq x_{a}^{b}\dq{\xi}^{a}\left(\dq x_{a}^{b}\right)^{*}.\label{eq:adj_transf}
\end{equation}

The time derivative of $\dq x_{b}^{a}$ is given by \cite{Adorno2017}
\begin{equation}
\dot{\dq x}_{b}^{a}=\frac{1}{2}\dq{\xi}_{ab}^{a}\dq x_{b}^{a}=\frac{1}{2}\dq x_{b}^{a}\dq{\xi}_{ab}^{b},\label{eq:x_dot}
\end{equation}
where 
\begin{gather}
\dq{\xi}_{ab}^{a}=\quat{\omega}_{ab}^{a}+\dual\left(\dot{\quat p}_{ab}^{a}+\quat p_{ab}^{a}\times\quat{\omega}_{ab}^{a}\right)\label{eq:twist-inertial-frame}
\end{gather}
is the twist of frame $\frame b$ with respect to frame $\frame a$,
expressed in frame $\frame a$,\footnote{\textcolor{blue}{I}t is important to use three indices here because
the twist between two frames can be seen from a third frame. So, for
example, $\dq{\xi}_{a,b}^{c}$ is the twist of frame $\frame b$ with
respect to frame $\frame a$, expressed in frame $\frame c$. The
same interpretation is used for wrenches, which will be properly introduced
in Section~\ref{subsec:dqNE_backward_recursion}).} with $\quat{\omega}_{ab}^{a}\in\mathbb{H}_{p}$ being the angular
velocity, and 
\begin{gather}
\dq{\xi}_{ab}^{b}=\ad{\dq x_{a}^{b}}{\dq{\xi}_{ab}^{a}}=\quat{\omega}_{ab}^{b}+\dual\dot{\quat p}_{ab}^{b}\label{eq:twist-body-frame}
\end{gather}
is the twist expressed in $\frame b$. Furthermore, $\dq{\xi}_{ab}^{a}$
and $\dq{\xi}_{ab}^{b}$ are elements of the Lie algebra associated
with $\spinr$. Additionally,
\begin{align}
\quat p\times\quat{\omega}\triangleq & \frac{\quat p\quat{\omega}-\quat{\omega}\quat p}{2},\label{eq:cross_product}
\end{align}
$\quat p,\quat{\omega}\in\mathbb{H}_{p}$, is the cross-product between
pure quaternions, which is analogous to the cross product between
vectors in $\mathbb{R}^{3}$ \cite{Adorno2017}.

The cross-product between $\dq l,\dq s\in\mathcal{H}_{p}$, where
$\dq l=\quat l+\dual\quat l'$ and $\dq s=\quat s+\dual\quat s'$,
is analogous to \eqref{eq:cross_product} and given by
\begin{equation}
\dq l\times\dq s\triangleq\frac{\dq l\dq s-\dq s\dq l}{2}=\quat l\times\quat s+\dual\left(\quat l\times\quat s'+\quat l'\times\quat s\right).\label{eq:cross_product_dq}
\end{equation}

\begin{lemma}\label{lem:derivative_of_adjoint_transformation}

If $\dq x\in\mathcal{\dq S},$ such that $\dot{\dq x}=(1/2)\dq{\xi}\dq x$
and $\dq{\xi}'\in\mathcal{H}_{p}$, then 
\begin{equation}
\frac{d}{dt}\left(\ad{\dq x}{\dq{\xi}'}\right)=\ad{\dq x}{\dot{\dq{\xi}'}}+\dq{\xi}\times\left(\ad{\dq x}{\dq{\xi}'}\right).\label{eq:derivative_of_adjoint_transformation}
\end{equation}

\end{lemma}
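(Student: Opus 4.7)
The plan is to apply the product rule directly to the definition $\ad{\dq x}{\dq{\xi}'}=\dq x\dq{\xi}'\dq x^{*}$ and then re-assemble the terms using the given derivative formula for $\dq x$ together with the definition of the dual quaternion cross product in \eqref{eq:cross_product_dq}.

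First I would differentiate:
\begin{equation*}
\frac{d}{dt}\left(\dq x\dq{\xi}'\dq x^{*}\right)=\dot{\dq x}\dq{\xi}'\dq x^{*}+\dq x\dot{\dq{\xi}'}\dq x^{*}+\dq x\dq{\xi}'\dot{\dq x}^{*}.
\end{equation*}
The middle term is already $\ad{\dq x}{\dot{\dq{\xi}'}}$, so I only need to handle the outer two. Substituting $\dot{\dq x}=(1/2)\dq{\xi}\dq x$ into the first term yields $(1/2)\dq{\xi}\dq x\dq{\xi}'\dq x^{*}=(1/2)\dq{\xi}\left(\ad{\dq x}{\dq{\xi}'}\right)$. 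For the third term I would use that conjugation reverses products, so $\dot{\dq x}^{*}=(1/2)\left(\dq{\xi}\dq x\right)^{*}=(1/2)\dq x^{*}\dq{\xi}^{*}$, and invoke the fact that $\dq{\xi}\in\mathcal{H}_{p}$ has zero real part, hence $\dq{\xi}^{*}=-\dq{\xi}$. This gives $\dq x\dq{\xi}'\dot{\dq x}^{*}=-(1/2)\left(\ad{\dq x}{\dq{\xi}'}\right)\dq{\xi}$.

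Adding the first and third terms then produces
\begin{equation*}
\frac{1}{2}\Bigl(\dq{\xi}\left(\ad{\dq x}{\dq{\xi}'}\right)-\left(\ad{\dq x}{\dq{\xi}'}\right)\dq{\xi}\Bigr),
\end{equation*}
which by \eqref{eq:cross_product_dq} is exactly $\dq{\xi}\times\left(\ad{\dq x}{\dq{\xi}'}\right)$, provided the bracket is applied to pure dual quaternions; I would note that $\ad{\dq x}{\dq{\xi}'}\in\mathcal{H}_{p}$ because the adjoint preserves the set of pure dual quaternions, so the cross product is well-defined.

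The only real subtlety is the conjugate-sign bookkeeping: one must remember both that $(\dq a\dq b)^{*}=\dq b^{*}\dq a^{*}$ and that the pure-dual-quaternion hypothesis forces $\dq{\xi}^{*}=-\dq{\xi}$, because without this minus sign the two outer terms would add instead of commutating and the cross product would not appear. Everything else is a mechanical application of the product rule and the definitions already established in the excerpt.
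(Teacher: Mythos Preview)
Your proposal is correct and follows essentially the same route as the paper: apply the product rule to $\dq x\dq{\xi}'\dq x^{*}$, substitute $\dot{\dq x}=(1/2)\dq{\xi}\dq x$ and $\dot{\dq x}^{*}=-(1/2)\dq x^{*}\dq{\xi}$, and recognize the resulting commutator as the cross product \eqref{eq:cross_product_dq}. The paper packages the conjugate step as the single identity $(\dq{\xi}\dq x)^{*}=-\dq x^{*}\dq{\xi}$, while you unpack it into $(\dq a\dq b)^{*}=\dq b^{*}\dq a^{*}$ together with $\dq{\xi}^{*}=-\dq{\xi}$; your added remark that $\ad{\dq x}{\dq{\xi}'}\in\mathcal{H}_{p}$ keeps the cross product well-defined is a welcome clarification that the paper leaves implicit.
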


\begin{proof}

Using \eqref{eq:adj_transf}, \eqref{eq:x_dot}, and the fact that
$\left(\dq{\xi}\dq x\right)^{*}=-\dq x^{*}\dq{\xi}$, we obtain 
\begin{align}
\frac{d}{dt}\left(\ad{\dq x}{\dq{\xi}}'\right) & =\dot{\dq x}\dq{\xi}'\dq x^{*}+\dq x\dot{\dq{\xi}'}\dq x^{*}+\dq x\dq{\xi}'\dot{\dq x}^{*}\nonumber \\
 & =\frac{1}{2}\dq{\xi}\left(\dq x\dq{\xi}'\dq x^{*}\right)+\dq x\dot{\dq{\xi}'}\dq x^{*}-\frac{1}{2}\left(\dq x\dq{\xi}'\dq x^{*}\right)\dq{\xi}.\label{eq:adjoint_derivative_intermediate}
\end{align}
Finally, using \eqref{eq:cross_product_dq} in \eqref{eq:adjoint_derivative_intermediate}
yields \eqref{eq:derivative_of_adjoint_transformation}.

\end{proof}

The quaternionic inertia tensor is defined as
\begin{equation}
\quat{\mathbb{I}}\triangleq\left(\quat i_{x},\quat i_{y},\quat i_{z}\right)\in\mathbb{H}_{p}^{3}\subset\mathcal{H}^{n},\label{eq:quat_inertia_tensor}
\end{equation}
where $\quat i_{x}=I_{xx}\imi+I_{xy}\imj+I_{xz}\imk$, $\quat i_{y}=I_{yx}\imi+I_{yy}\imj+I_{yz}\imk$,
and $\quat i_{z}=I_{zx}\imi+I_{zy}\imj+I_{zz}\imk$, in which $I_{nn}$,
with $n\in\left\{ x,y,z\right\} $, are elements of the rigid body's
inertia tensor.

\begin{definition}\label{def_operator_l}

Given $\quat A=\left(\quat a_{x},\quat a_{y},\quat a_{z}\right)\in\mathbb{H}_{p}^{3}$
and $\quat b\in\mathbb{H}_{p}$, the operator $\mathcal{L}_{3}:\mathbb{H}_{p}^{3}\times\mathbb{H}_{p}\to\mathbb{H}_{p}$,
is defined as
\begin{align}
\mathcal{L}_{3}\left(\quat A\right)\quat b & =\imi\dotproduct{\quat a_{x},\quat b}+\imj\dotproduct{\quat a_{y},\quat b}+\imk\dotproduct{\quat a_{z},\quat b},\label{eq:operator_l}
\end{align}
where $\dotproduct{\cdot,\cdot}:\mathbb{H}_{p}\to\mathbb{R}$ is the
inner product between quaternions;\footnote{The inner product in $\mathbb{H}_{p}$ is equivalent to the inner
product in $\mathbb{R}^{3}$.} that is, given $\quat a,\quat b\in\mathbb{H}_{p}$, then $\dotproduct{\quat a,\quat b}\triangleq-(\quat{ab}+\quat{ba})/2$.

\end{definition}

From Definition~\ref{def_operator_l}, it follows that the angular
momentum $\quat{\ell}$ in (dual) quaternion algebra is given by
\begin{equation}
\quat{\ell}=\mathcal{L}_{3}\left(\quat{\mathbb{I}}\right)\quat{\omega},\label{eq:angular_momentum}
\end{equation}
where $\quat{\omega}\in\mathbb{H}_{p}$ is the angular velocity. Direct
calculation shows that \eqref{eq:angular_momentum} is equivalent
to its counterpart in vector algebra.

Given the quaternionic inertia tensor $\quat{\mathbb{I}}'\in\mathbb{H}_{p}^{3}$
of a rigid body expressed in frame $\frame{}'$, and the rigid body's
angular velocity $\quat{\omega}\in\mathbb{H}_{p}$ expressed in frame
$\frame{}$, the angular momentum expressed in frame $\frame{}$ is
given by
\begin{equation}
\quat{\ell}=\ad{\quat r^{*}}{\mathcal{L}_{3}\left(\quat{\mathbb{I}}'\right)\ad{\quat r}{\quat{\omega}}},\label{eq:similarity-transformation}
\end{equation}
where $\quat r$ is the rotation quaternion from $\frame{}^{'}$ to
$\frame{}$. Eq.~\eqref{eq:similarity-transformation} is analogous
to the equation that one obtains when using similarity transformations
of rotation matrices and vectors in $\mathbb{R}^{3}$.

\selectlanguage{american}%

\section{Dual Quaternion Newton-Euler Model \label{sec:Newton-Euler-Model}}

This section presents the recurrence relations of the Newton-Euler
model using dual quaternion algebra for mobile manipulators with arbitrary
joints, assuming that the full kinematic model is available using
dual quaternion representation \cite{Adorno2011}.

For illustrative purposes, and without loss of generality, consider
the mobile manipulator shown in Fig.~\ref{fig:mobile_nDOF_robot},
composed of an $n_{\ell}$-DoF serial manipulator attached to a 3-DoF
mobile base. The goal is to find the wrenches $\dq{\Gamma}\in\mathcal{H}_{p}^{n}$
acting on the $n=n_{\ell}+1$ centers of mass (CoM) of the robot's
mobile base and the $n_{\ell}$ links, given the corresponding robot
configuration, generalized velocities, and generalized accelerations.
This can be seen as a function $\mathcal{N}\,:\,\mathbb{R}^{n_{\ell}+3}\times\mathbb{R}^{n_{\ell}+3}\times\mathbb{R}^{n_{\ell}+3}\rightarrow\mathcal{H}_{p}^{n}$,
where $n_{\ell}+3$ is the dimension of the configuration space and
$n$ is the number of rigid bodies in the kinematic chain (e.g., the
mobile base and links), such that
\begin{equation}
\dq{\Gamma}=\mathcal{N}\left(\myvec q,\dot{\myvec q},\ddot{\myvec q}\right).\label{eq:dq_NE_as_a_function}
\end{equation}
\begin{figure}[t]
\def\svgwidth{2.5\columnwidth}
\noindent \begin{centering}
{\Huge{}\resizebox{0.8\columnwidth}{!}{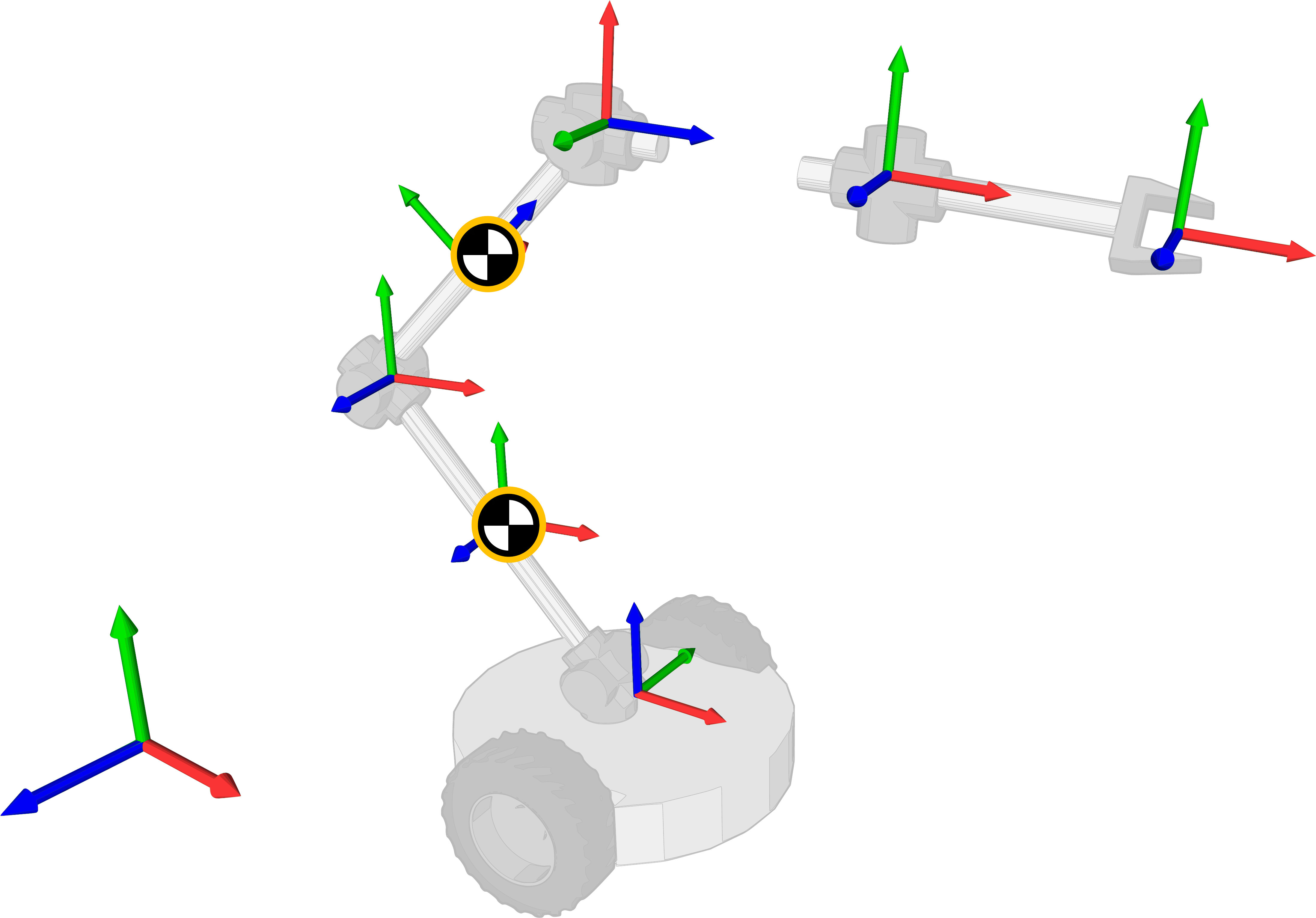}}{\Huge\par}
\par\end{centering}
\caption{Mobile manipulator composed of a manipulator with $n_{\ell}$-DoF
serially attached to a 3-DoF mobile base.\label{fig:mobile_nDOF_robot}}
\end{figure}

\subsection{Forward Recursion\label{subsec:Forward-Recursion}}

The first process of the algorithm consists of a serial sweeping of
the robot kinematic structure to calculate the twist of each CoM.
\footnote{Henceforth, we will use the expression ``twist of the CoM'' as a
shorthand for ``the twist of the frame attached to the CoM.''} The objective is to find the forward recurrence relations that will
then be used to iteratively obtain the wrenches acting on the robot's
mobile base and joints.

\subsubsection{Twists \label{subsec:Twists}}

The twist of the mobile base's CoM (i.e., the first CoM in the serial
kinematic chain) with respect to the inertial frame $\frame 0$ ,
expressed in frame $\frame{c_{1}}$, is given by the pure dual quaternion
\begin{align}
\dq{\xi}_{0,c_{1}}^{c_{1}} & =\quat{\omega}_{0,c_{1}}^{c_{1}}+\dual\quat v_{0,c_{1}}^{c_{1}},\label{eq:twist_c1_0}
\end{align}
where $\quat{\omega}_{0,c_{1}}^{c_{1}}=\omega_{x}\imi+\omega_{y}\imj+\omega_{z}\imk$
and $\quat v_{0,c_{1}}^{c_{1}}=v_{x}\imi+v_{y}\imj+v_{z}\imk$ are,
respectively, the angular and the linear velocities. The twist of
a holonomic mobile base is kinematically equivalent to the one of
a planar joint, shown in Table~\ref{tab:joint_twists-2}.

The twist of the first link's CoM (i.e., of the second rigid body
in the serial kinematic chain) with respect to the inertial frame
depends not only on the twist generated by its joint but also on the
twist of the mobile base because they are physically attached. Therefore,
\begin{align}
\dq{\xi}_{0,c_{2}}^{c_{2}} & =\dq{\xi}_{0,j_{1}}^{c_{2}}+\dq{\xi}_{j_{1},c_{2}}^{c_{2}},\nonumber \\
 & =\ad{\dq x_{c_{1}}^{c_{2}}}{\dq{\xi}_{0,j_{1}}^{c_{1}}}+\ad{\dq x_{j_{1}}^{c_{2}}}{\dq{\xi}_{j_{1},c_{2}}^{j_{1}}},\nonumber \\
 & =\ad{\dq x_{c_{1}}^{c_{2}}}{\left(\dq{\xi}_{0,c_{1}}^{c_{1}}+\dq{\xi}_{c_{1},j_{1}}^{c_{1}}\right)}+\ad{\dq x_{j_{1}}^{c_{2}}}{\dq{\xi}_{j_{1},c_{2}}^{j_{1}}},\label{eq:twist_c2_0}
\end{align}
where $\dq{\xi}_{j_{1},c_{2}}^{j_{1}}=\quat{\omega}_{j_{1},c_{2}}^{j_{1}}+\dual\quat v_{j_{1},c_{2}}^{j_{1}}$
is the twist on $\frame{c_{2}}$ generated by the first joint, and
$\dq{\xi}_{0,j_{1}}^{c_{2}}$ is the twist on $\frame{j_{1}}$ generated
by the mobile base, but expressed in $\frame{c_{2}}$ using a suitable
transformation as in \eqref{eq:adj_transf}. Table~\ref{tab:joint_twists-2}
presents the twists for different types of joints, where $\quat l\in\mathbb{H}_{p}\cap\mathbb{S}^{3}$
is a \emph{constant} unit-norm pure quaternion, which is equivalent
to a vector in $\mathbb{R}^{3}$, that is used to define an arbitrary
axis. For instance, when using the Denavit-Hartenberg (DH) convention,
$\quat l=\imk$, which is equivalent to the $z$-axis. Furthermore,
$\omega,\omega_{x},\omega_{y},\omega_{z}\in\mathbb{R}$ and $v,v_{x},v_{y},v_{z}\in\mathbb{R}$
are the scalar components of the angular and linear velocities, respectively.
Again, when using the DH convention, $\omega=\dot{\theta}$ for a
revolute joint and $v=\dot{d}$ for a prismatic joint. For helical
joints, the constant $h\in\mathbb{R}$ is called the \emph{pitch.}

\begin{table*}[t]
\begin{centering}
\caption{Twists of some of the most commonly used joints in robotics, where
$\protect\quat l\in\mathbb{H}_{p}\cap\mathbb{S}^{3}$ and $\omega,\omega_{x},\omega_{y},\omega_{z},v,v_{x},v_{y},v_{z},h\in\mathbb{R}$.\label{tab:joint_twists-2}}
\par\end{centering}
\centering{}%
\begin{tabular}{>{\centering}p{3.3cm}>{\centering}m{2.2cm}>{\centering}p{2.5cm}>{\centering}p{3cm}>{\centering}p{2.5cm}>{\centering}p{2.5cm}c}
\hline 
\noindent \centering{}6-DoF & \noindent \centering{}Revolute & \noindent \centering{}Spherical & \noindent \centering{}Cylindrical & \noindent \centering{}Planar & \noindent \centering{}Prismatic & Helical\tabularnewline
\hline 
\noindent \centering{}\resizebox{0.22\columnwidth}{!}{
\begingroup%
  \makeatletter%
  \providecommand\color[2][]{%
    \errmessage{(Inkscape) Color is used for the text in Inkscape, but the package 'color.sty' is not loaded}%
    \renewcommand\color[2][]{}%
  }%
  \providecommand\transparent[1]{%
    \errmessage{(Inkscape) Transparency is used (non-zero) for the text in Inkscape, but the package 'transparent.sty' is not loaded}%
    \renewcommand\transparent[1]{}%
  }%
  \providecommand\rotatebox[2]{#2}%
  \newcommand*\fsize{\dimexpr\f@size pt\relax}%
  \newcommand*\lineheight[1]{\fontsize{\fsize}{#1\fsize}\selectfont}%
  \ifx\svgwidth\undefined%
    \setlength{\unitlength}{2654.08008966bp}%
    \ifx\svgscale\undefined%
      \relax%
    \else%
      \setlength{\unitlength}{\unitlength * \real{\svgscale}}%
    \fi%
  \else%
    \setlength{\unitlength}{\svgwidth}%
  \fi%
  \global\let\svgwidth\undefined%
  \global\let\svgscale\undefined%
  \makeatother%
  \begin{picture}(1,0.98153764)%
    \lineheight{1}%
    \setlength\tabcolsep{0pt}%
    \put(0,0){\includegraphics[width=\unitlength,page=1]{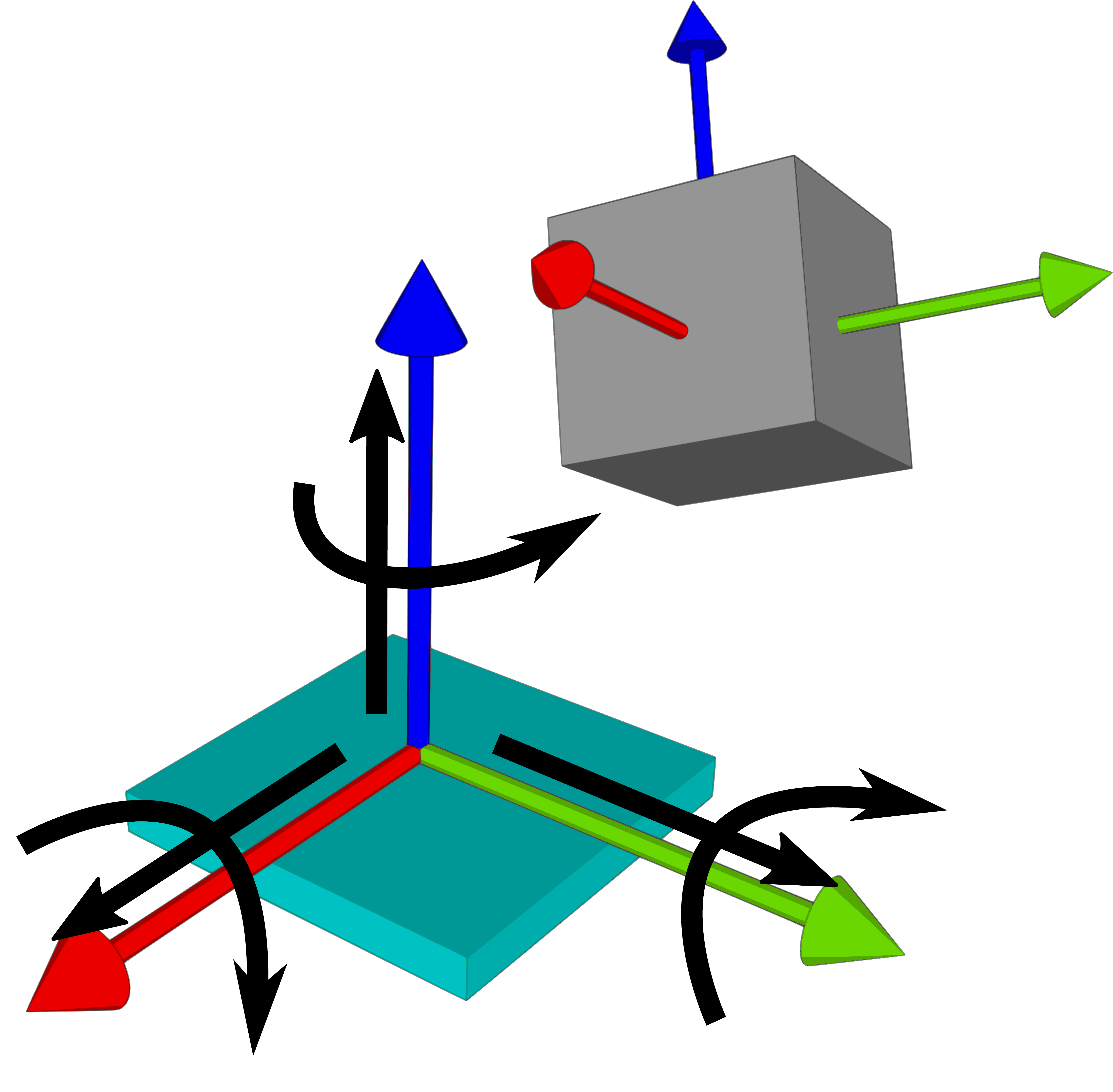}}%
  \end{picture}%
\endgroup%
} & \noindent \centering{}\resizebox{0.15\columnwidth}{!}{
\begingroup%
  \makeatletter%
  \providecommand\color[2][]{%
    \errmessage{(Inkscape) Color is used for the text in Inkscape, but the package 'color.sty' is not loaded}%
    \renewcommand\color[2][]{}%
  }%
  \providecommand\transparent[1]{%
    \errmessage{(Inkscape) Transparency is used (non-zero) for the text in Inkscape, but the package 'transparent.sty' is not loaded}%
    \renewcommand\transparent[1]{}%
  }%
  \providecommand\rotatebox[2]{#2}%
  \newcommand*\fsize{\dimexpr\f@size pt\relax}%
  \newcommand*\lineheight[1]{\fontsize{\fsize}{#1\fsize}\selectfont}%
  \ifx\svgwidth\undefined%
    \setlength{\unitlength}{1454.82150821bp}%
    \ifx\svgscale\undefined%
      \relax%
    \else%
      \setlength{\unitlength}{\unitlength * \real{\svgscale}}%
    \fi%
  \else%
    \setlength{\unitlength}{\svgwidth}%
  \fi%
  \global\let\svgwidth\undefined%
  \global\let\svgscale\undefined%
  \makeatother%
  \begin{picture}(1,1.42786475)%
    \lineheight{1}%
    \setlength\tabcolsep{0pt}%
    \put(0,0){\includegraphics[width=\unitlength,page=1]{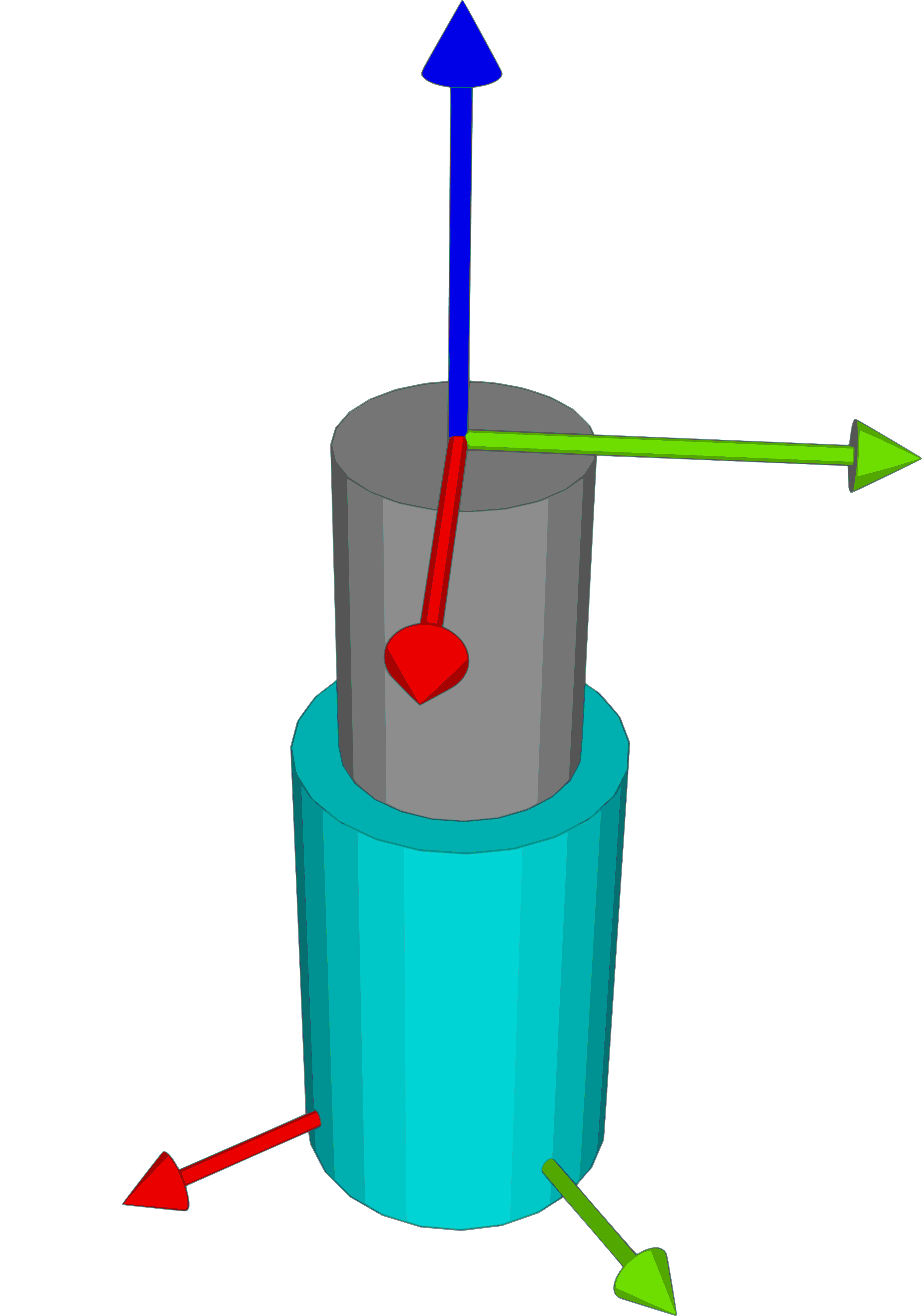}}%
    \put(0.66923827,1.06659749){\color[rgb]{0,0,0}\makebox(0,0)[lt]{\lineheight{1.25}\smash{\begin{tabular}[t]{l}$\theta$\end{tabular}}}}%
    \put(0,0){\includegraphics[width=\unitlength,page=2]{Images/joints/rotational.pdf}}%
  \end{picture}%
\endgroup%
} & \noindent \centering{}\resizebox{0.18\columnwidth}{!}{
\begingroup%
  \makeatletter%
  \providecommand\color[2][]{%
    \errmessage{(Inkscape) Color is used for the text in Inkscape, but the package 'color.sty' is not loaded}%
    \renewcommand\color[2][]{}%
  }%
  \providecommand\transparent[1]{%
    \errmessage{(Inkscape) Transparency is used (non-zero) for the text in Inkscape, but the package 'transparent.sty' is not loaded}%
    \renewcommand\transparent[1]{}%
  }%
  \providecommand\rotatebox[2]{#2}%
  \newcommand*\fsize{\dimexpr\f@size pt\relax}%
  \newcommand*\lineheight[1]{\fontsize{\fsize}{#1\fsize}\selectfont}%
  \ifx\svgwidth\undefined%
    \setlength{\unitlength}{2217.61334277bp}%
    \ifx\svgscale\undefined%
      \relax%
    \else%
      \setlength{\unitlength}{\unitlength * \real{\svgscale}}%
    \fi%
  \else%
    \setlength{\unitlength}{\svgwidth}%
  \fi%
  \global\let\svgwidth\undefined%
  \global\let\svgscale\undefined%
  \makeatother%
  \begin{picture}(1,0.90095711)%
    \lineheight{1}%
    \setlength\tabcolsep{0pt}%
    \put(0,0){\includegraphics[width=\unitlength,page=1]{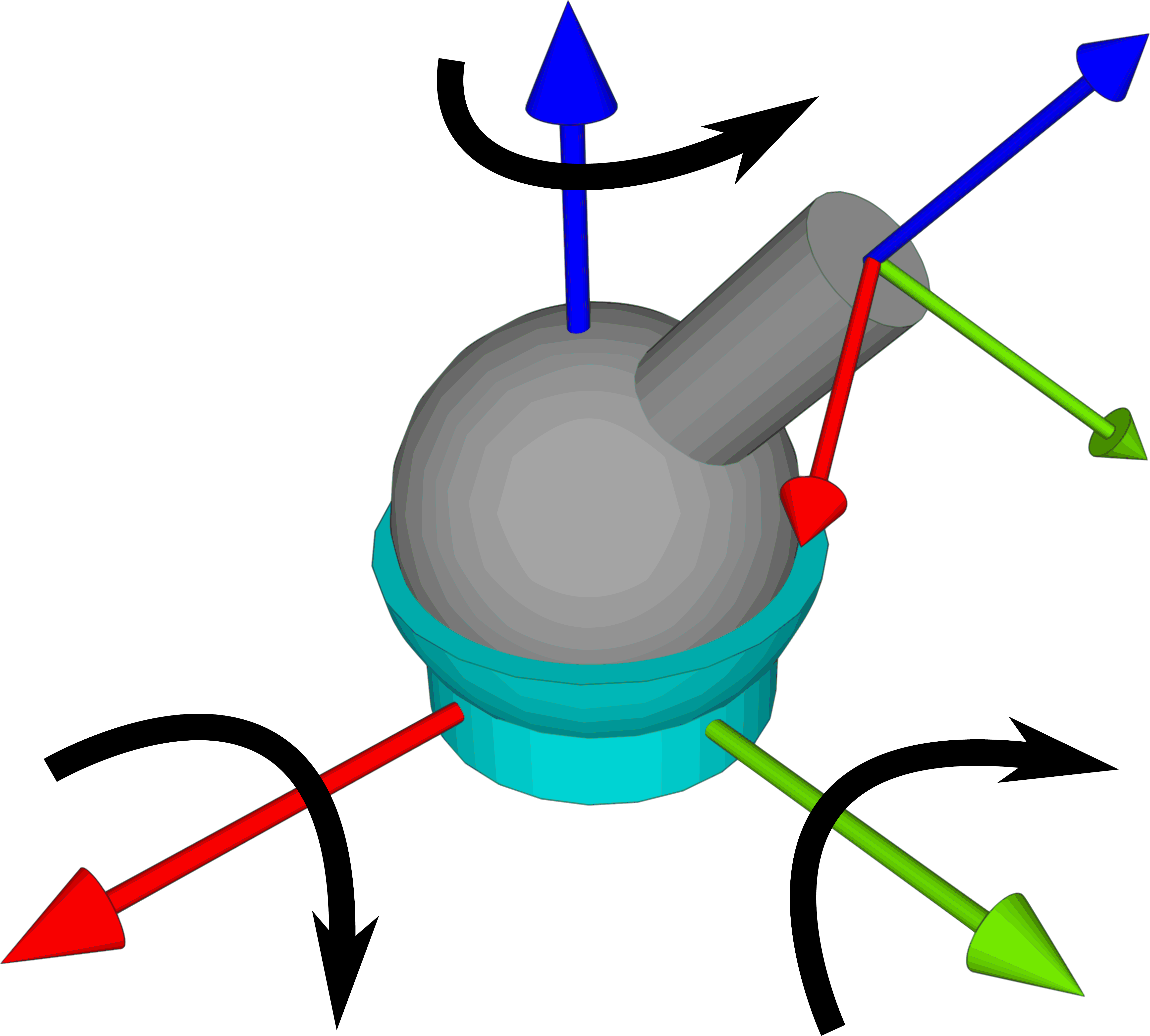}}%
  \end{picture}%
\endgroup%
} & \noindent \centering{}\resizebox{0.15\columnwidth}{!}{
\begingroup%
  \makeatletter%
  \providecommand\color[2][]{%
    \errmessage{(Inkscape) Color is used for the text in Inkscape, but the package 'color.sty' is not loaded}%
    \renewcommand\color[2][]{}%
  }%
  \providecommand\transparent[1]{%
    \errmessage{(Inkscape) Transparency is used (non-zero) for the text in Inkscape, but the package 'transparent.sty' is not loaded}%
    \renewcommand\transparent[1]{}%
  }%
  \providecommand\rotatebox[2]{#2}%
  \newcommand*\fsize{\dimexpr\f@size pt\relax}%
  \newcommand*\lineheight[1]{\fontsize{\fsize}{#1\fsize}\selectfont}%
  \ifx\svgwidth\undefined%
    \setlength{\unitlength}{688.13700999bp}%
    \ifx\svgscale\undefined%
      \relax%
    \else%
      \setlength{\unitlength}{\unitlength * \real{\svgscale}}%
    \fi%
  \else%
    \setlength{\unitlength}{\svgwidth}%
  \fi%
  \global\let\svgwidth\undefined%
  \global\let\svgscale\undefined%
  \makeatother%
  \begin{picture}(1,1.43517322)%
    \lineheight{1}%
    \setlength\tabcolsep{0pt}%
    \put(0,0){\includegraphics[width=\unitlength,page=1]{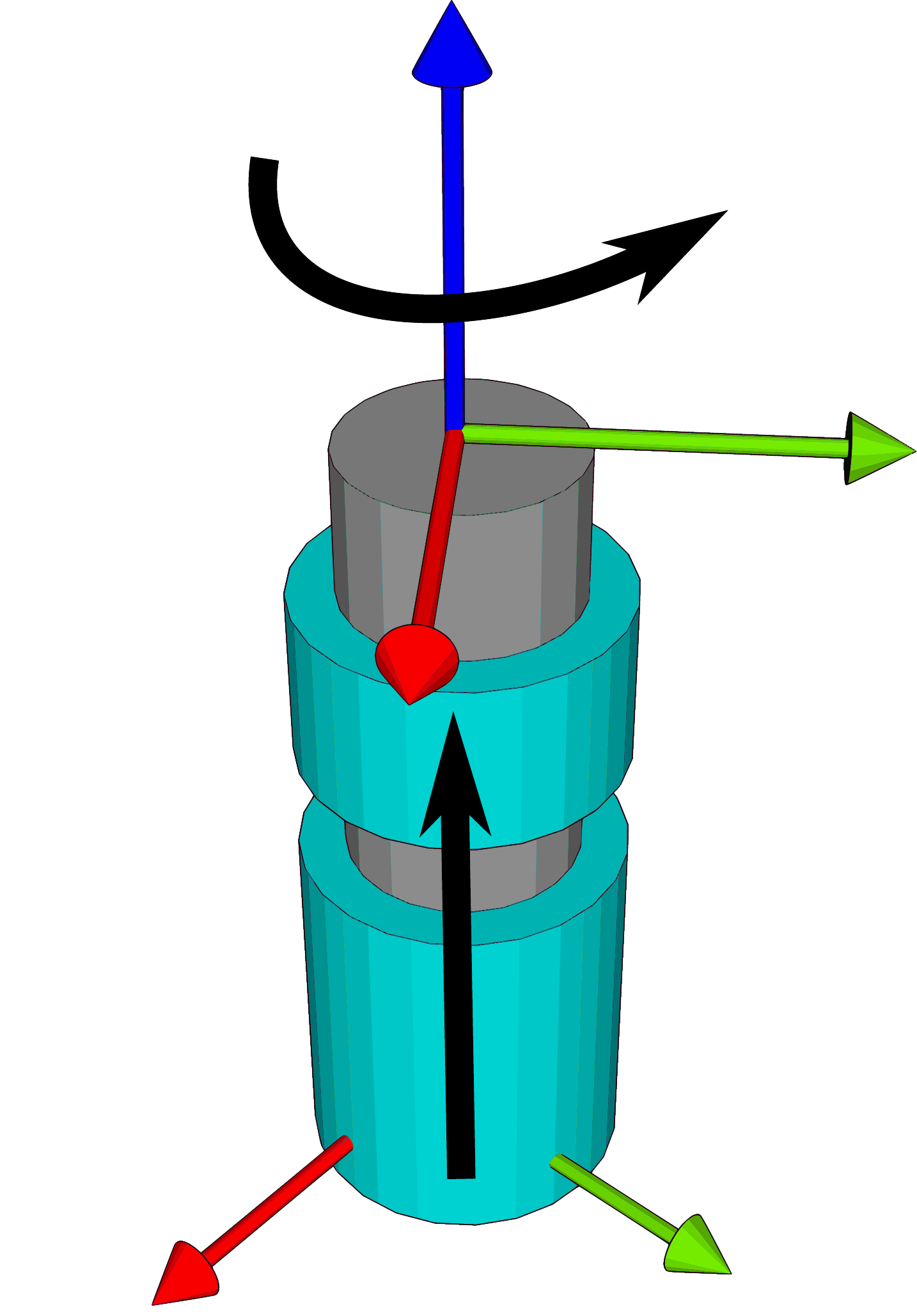}}%
  \end{picture}%
\endgroup%
} & \noindent \centering{}\resizebox{0.18\columnwidth}{!}{
\begingroup%
  \makeatletter%
  \providecommand\color[2][]{%
    \errmessage{(Inkscape) Color is used for the text in Inkscape, but the package 'color.sty' is not loaded}%
    \renewcommand\color[2][]{}%
  }%
  \providecommand\transparent[1]{%
    \errmessage{(Inkscape) Transparency is used (non-zero) for the text in Inkscape, but the package 'transparent.sty' is not loaded}%
    \renewcommand\transparent[1]{}%
  }%
  \providecommand\rotatebox[2]{#2}%
  \newcommand*\fsize{\dimexpr\f@size pt\relax}%
  \newcommand*\lineheight[1]{\fontsize{\fsize}{#1\fsize}\selectfont}%
  \ifx\svgwidth\undefined%
    \setlength{\unitlength}{2728.26392947bp}%
    \ifx\svgscale\undefined%
      \relax%
    \else%
      \setlength{\unitlength}{\unitlength * \real{\svgscale}}%
    \fi%
  \else%
    \setlength{\unitlength}{\svgwidth}%
  \fi%
  \global\let\svgwidth\undefined%
  \global\let\svgscale\undefined%
  \makeatother%
  \begin{picture}(1,0.79879242)%
    \lineheight{1}%
    \setlength\tabcolsep{0pt}%
    \put(0,0){\includegraphics[width=\unitlength,page=1]{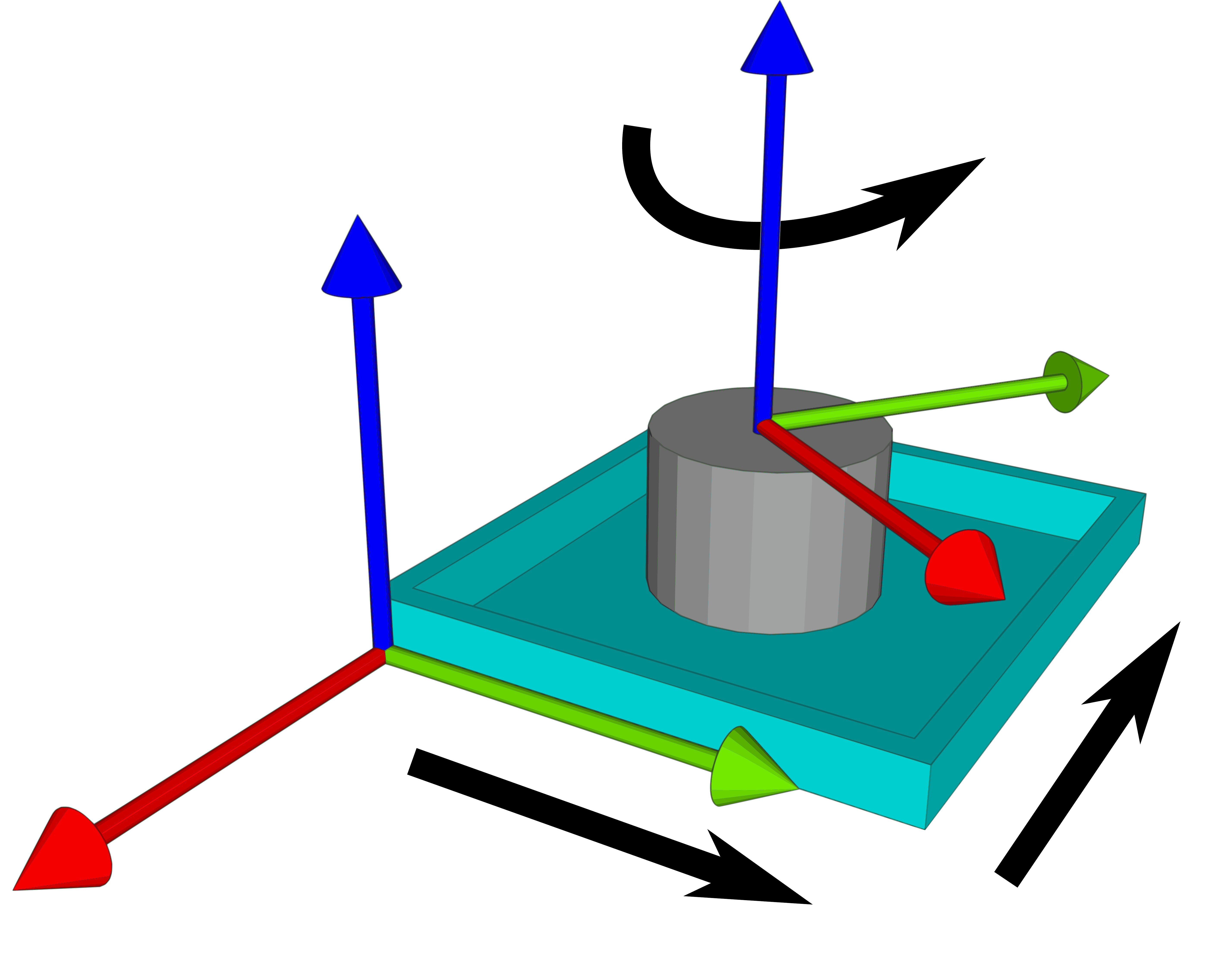}}%
  \end{picture}%
\endgroup%
} & \noindent \centering{}\resizebox{0.13\columnwidth}{!}{
\begingroup%
  \makeatletter%
  \providecommand\color[2][]{%
    \errmessage{(Inkscape) Color is used for the text in Inkscape, but the package 'color.sty' is not loaded}%
    \renewcommand\color[2][]{}%
  }%
  \providecommand\transparent[1]{%
    \errmessage{(Inkscape) Transparency is used (non-zero) for the text in Inkscape, but the package 'transparent.sty' is not loaded}%
    \renewcommand\transparent[1]{}%
  }%
  \providecommand\rotatebox[2]{#2}%
  \newcommand*\fsize{\dimexpr\f@size pt\relax}%
  \newcommand*\lineheight[1]{\fontsize{\fsize}{#1\fsize}\selectfont}%
  \ifx\svgwidth\undefined%
    \setlength{\unitlength}{1106.52428334bp}%
    \ifx\svgscale\undefined%
      \relax%
    \else%
      \setlength{\unitlength}{\unitlength * \real{\svgscale}}%
    \fi%
  \else%
    \setlength{\unitlength}{\svgwidth}%
  \fi%
  \global\let\svgwidth\undefined%
  \global\let\svgscale\undefined%
  \makeatother%
  \begin{picture}(1,1.76119611)%
    \lineheight{1}%
    \setlength\tabcolsep{0pt}%
    \put(0,0){\includegraphics[width=\unitlength,page=1]{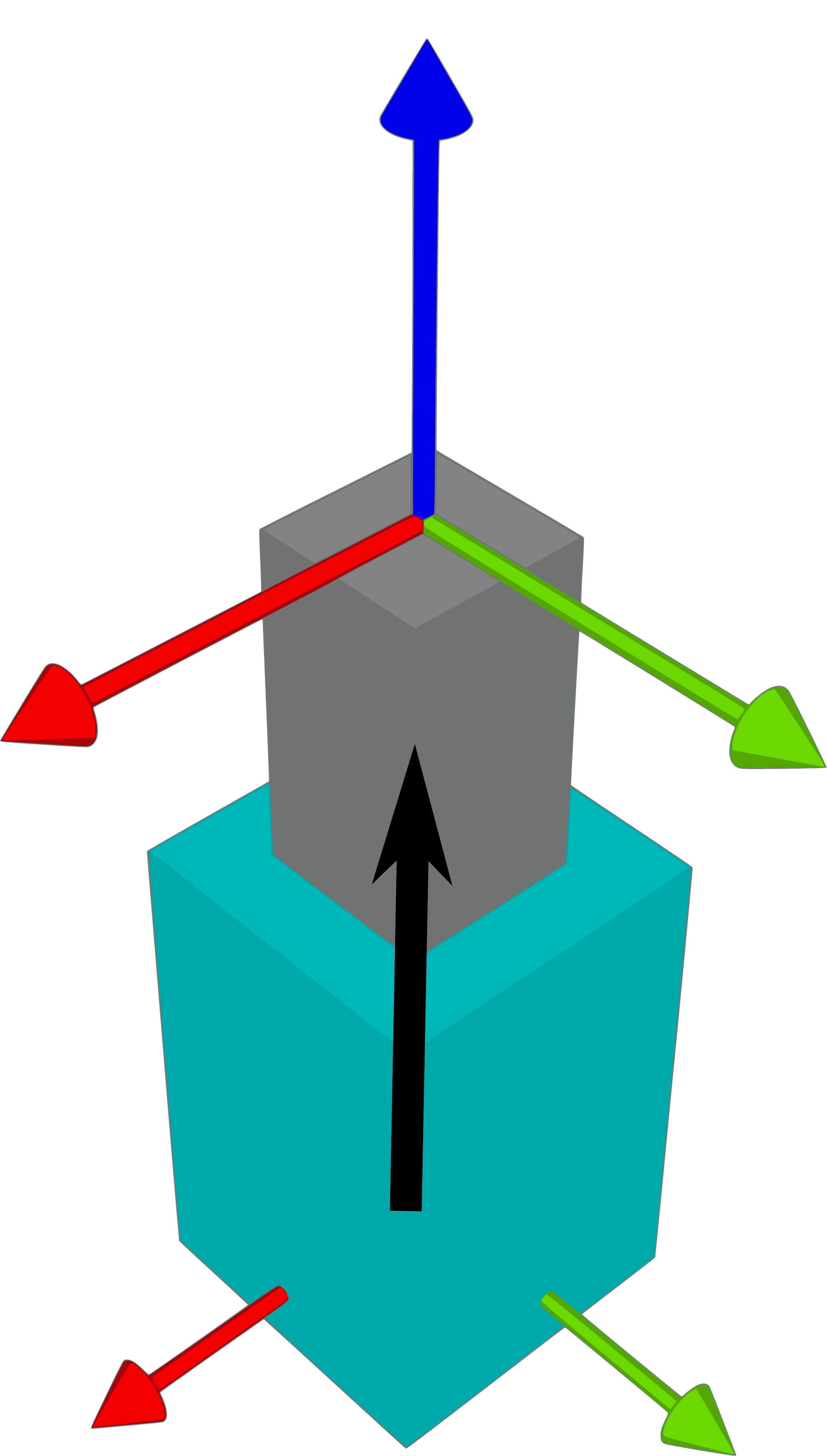}}%
    \put(0.61704539,0.39736169){\color[rgb]{0,0,0}\makebox(0,0)[lt]{\lineheight{1.25}\smash{\begin{tabular}[t]{l}$d$\end{tabular}}}}%
  \end{picture}%
\endgroup%
} & \resizebox{0.2\columnwidth}{!}{
\begingroup%
  \makeatletter%
  \providecommand\color[2][]{%
    \errmessage{(Inkscape) Color is used for the text in Inkscape, but the package 'color.sty' is not loaded}%
    \renewcommand\color[2][]{}%
  }%
  \providecommand\transparent[1]{%
    \errmessage{(Inkscape) Transparency is used (non-zero) for the text in Inkscape, but the package 'transparent.sty' is not loaded}%
    \renewcommand\transparent[1]{}%
  }%
  \providecommand\rotatebox[2]{#2}%
  \newcommand*\fsize{\dimexpr\f@size pt\relax}%
  \newcommand*\lineheight[1]{\fontsize{\fsize}{#1\fsize}\selectfont}%
  \ifx\svgwidth\undefined%
    \setlength{\unitlength}{1859.04865896bp}%
    \ifx\svgscale\undefined%
      \relax%
    \else%
      \setlength{\unitlength}{\unitlength * \real{\svgscale}}%
    \fi%
  \else%
    \setlength{\unitlength}{\svgwidth}%
  \fi%
  \global\let\svgwidth\undefined%
  \global\let\svgscale\undefined%
  \makeatother%
  \begin{picture}(1,1.14716445)%
    \lineheight{1}%
    \setlength\tabcolsep{0pt}%
    \put(0,0){\includegraphics[width=\unitlength,page=1]{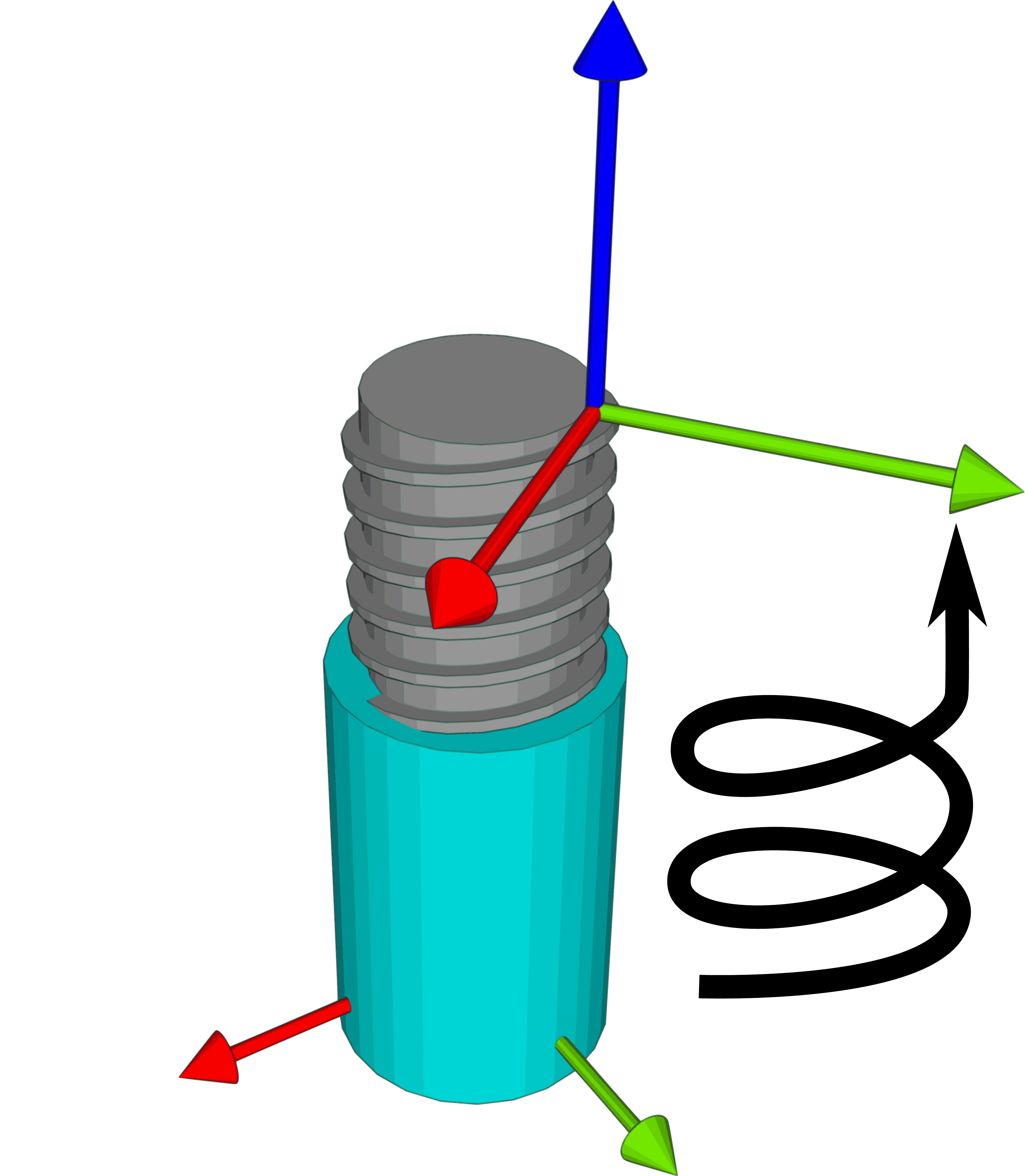}}%
  \end{picture}%
\endgroup%
}\tabularnewline
\hline 
\multirow{2}{3.3cm}{\noindent \centering{}$\dq{\xi}=\omega_{x}\imi+\omega_{y}\imj+\omega_{z}\imk+\dual\left(v_{x}\imi+v_{y}\imj+v_{z}\imk\right)$} & \multirow{2}{2.2cm}{\noindent \centering{}$\dq{\xi}=\omega\quat l$} & \multirow{2}{2.5cm}{\noindent \centering{}$\dq{\xi}=\omega_{x}\imi+\omega_{y}\imj\allowbreak+\omega_{z}\imk$} & \multirow{2}{3cm}{\noindent \centering{}$\dq{\xi}=\left(\omega+\dual v\right)\quat l$} & \multirow{2}{2.5cm}{\noindent \centering{}$\dq{\xi}=\omega\quat l+\allowbreak\dual\left(v_{x}\imi+v_{y}\imj\right)$} & \multirow{2}{2.5cm}{\noindent \centering{}$\dq{\xi}=\dual v\quat l$} & \multirow{2}{*}{$\dq{\xi}=\left(\omega+\dual h\omega\right)\quat l$}\tabularnewline
 &  &  &  &  &  & \tabularnewline
\hline 
\end{tabular}
\end{table*}

Moreover, $\dq{\xi}_{c_{1},j_{1}}^{c_{1}}=0$ because $\dot{\dq x}_{j_{1}}^{c_{1}}=0$.
Therefore,
\begin{align*}
\dq{\xi}_{0,c_{2}}^{c_{2}} & =\ad{\dq x_{c_{1}}^{c_{2}}}{\dq{\xi}_{0,c_{1}}^{c_{1}}}+\ad{\dq x_{j_{1}}^{c_{2}}}{\dq{\xi}_{j_{1},c_{2}}^{j_{1}}}.
\end{align*}
Furthermore, expanding $\ad{\dq x_{j_{1}}^{c_{2}}}{\dq{\xi}_{j_{1},c_{2}}^{j_{1}}}$,
we obtain
\[
\ad{\dq x_{j_{1}}^{c_{2}}}{\dq{\xi}_{j_{1},c_{2}}^{j_{1}}}=\quat{\omega}_{j_{1},c_{2}}^{c_{2}}+\dual\left(\quat v_{j_{1},c_{2}}^{c_{2}}+\quat{\omega}_{j_{1},c_{2}}^{c_{2}}\times\quat p_{j_{1},c_{2}}^{c_{2}}\right),
\]
where the linear velocity due to the application of an angular velocity
in a point displaced from the CoM (i.e., at $\frame{j_{1}}$) arises
algebraically. Fig.~\ref{fig:intuition_twist_transf} illustrates
this phenomenon when a purely rotational joint is used (i.e., $\dq{\xi}_{j_{i},c_{i+1}}^{j_{i}}=\quat{\omega}_{j_{i},c_{i+1}}^{j_{i}}=\omega_{i}\myvec n_{j_{i},c_{i+1}}^{j_{i}}$,
where $\myvec n_{j_{i},c_{i+1}}^{j_{i}}\in\mathbb{H}_{p}\cap\mathbb{S}^{3}$
is an arbitrary unit-norm rotation axis).

\begin{figure*}[t]
\def\svgwidth{2.0\columnwidth}
\begin{centering}
{\huge{}\scalebox{0.5}[0.5]{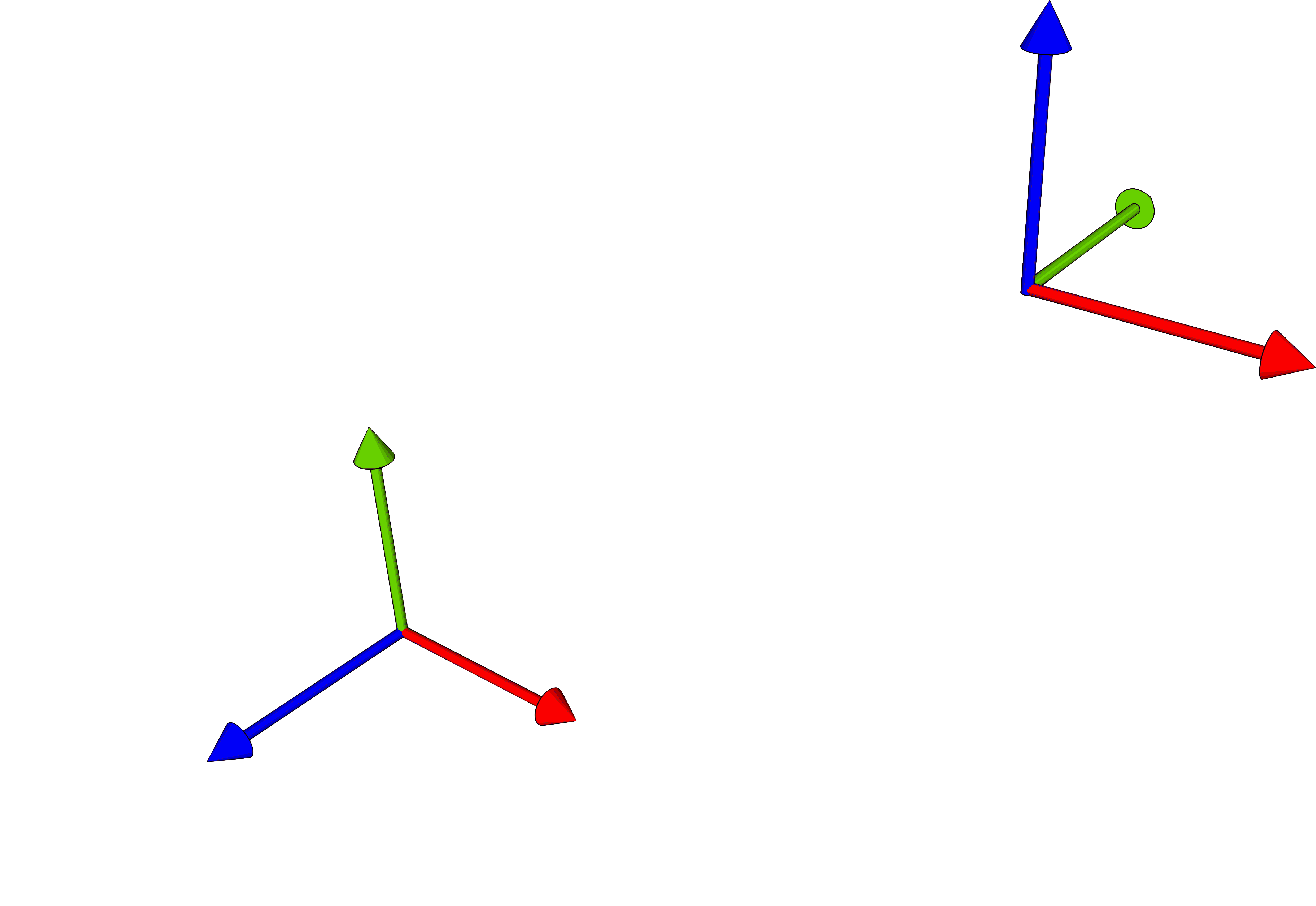}}{\huge\par}
\par\end{centering}
\caption{Twist $\protect\dq{\xi}_{j_{i},c_{i+1}}^{c_{i+1}}$generated due to
the application of an angular velocity $\omega_{i}$ around an arbitrary
axis of the reference frame $\protect\frame{j_{i}}$. The circular
trajectory that $\protect\frame{c_{i+1}}$ follows is represented
by the dashed gray line. The linear velocity due to the application
of $\omega_{i}$ appears algebraically through the adjoint transformation.
Thus, the tangential velocity of the reference frame $\protect\frame{c_{i+1}}$,
represented as a solid black arrow, is given by the dual part of the
twist $\protect\dq{\xi}_{j_{i},c_{i+1}}^{c_{i+1}}$. \label{fig:intuition_twist_transf}}
\end{figure*}

More generally, the twist in $\frame{c_{i}}$that provides the motion
of $\frame{c_{i}}$ with respect to $\frame 0$, which arises from
the movement of the first $i$ rigid bodies in the kinematic chain,
is given by
\begin{align}
\dq{\xi}_{0,c_{i}}^{c_{i}} & =\dq{\xi}_{0,j_{i-1}}^{c_{i}}+\dq{\xi}_{j_{i-1},c_{i}}^{c_{i}}\label{eq:twist_ci_0}
\end{align}
where $\dq{\xi}_{0,j_{i-1}}^{c_{i}}$ is the twist related to the
motion of the first $i-1$ rigid bodies and $\dq{\xi}_{j_{i-1},c_{i}}^{c_{i}}$
is the twist related to the motion of the $i$th rigid body. Also,
$\dq{\xi}_{0,0}^{a}=0$ for any $a$.

Analyzing \eqref{eq:twist_c1_0}, \eqref{eq:twist_c2_0}, and \eqref{eq:twist_ci_0}
we find, by induction, the recurrence relation for the total twist
of the $i$th CoM, which has the contribution of all rigid bodies
up to the $i$th rigid body, expressed in $\frame{c_{i}}$, as
\begin{multline*}
\dq{\xi}_{0,c_{i}}^{c_{i}}=\ad{\dq x_{c_{i-1}}^{c_{i}}}{\left(\dq{\xi}_{0,c_{i-1}}^{c_{i-1}}+\dq{\xi}_{c_{i-1},j_{i-1}}^{c_{i-1}}\right)}\\
+\ad{\dq x_{j_{i-1}}^{c_{i}}}{\dq{\xi}_{j_{i-1},c_{i}}^{j_{i-1}}},
\end{multline*}
where $c_{0}=j_{0}=0$ , and $\dq{\xi}_{c_{i-1},j_{i-1}}^{c_{i-1}}=0$
because $\dot{\dq x}_{j_{i-1}}^{c_{i-1}}=0$ for all $i$. Therefore,

\begin{equation}
\dq{\xi}_{0,c_{i}}^{c_{i}}=\ad{\dq x_{c_{i-1}}^{c_{i}}}{\dq{\xi}_{0,c_{i-1}}^{c_{i-1}}}+\ad{\dq x_{j_{i-1}}^{c_{i}}}{\dq{\xi}_{j_{i-1},c_{i}}^{j_{i-1}}.}\label{eq:twist_rec}
\end{equation}
Since the twist $\dq{\xi}_{j_{i-1},c_{i}}^{j_{i-1}}$ is generated
by the $i$th joint, its expression depends on which type the $i$th
joint is (\emph{see} Table~\ref{tab:joint_twists-2}). Similarly,
twist $\dq{\xi}_{0,c_{1}}^{c_{1}}$ depends on which type the mobile
base is (for holonomic mobile bases, for instance, it is equivalent
to the one given by a planar joint). The transformation $\dq x_{c_{i-1}}^{c_{i}}$
is calculated as $\dq x_{c_{i-1}}^{c_{i}}=\left(\dq x_{c_{i}}^{0}\right)^{*}\dq x_{c_{i-1}}^{0}$,
where $\dq x_{c_{i}}^{0}=\dq x_{c_{i-1}}^{0}\dq x_{j_{i-1}}^{c_{i-1}}\dq x_{c_{i}}^{j_{i-1}}$
with $\dq x_{0}^{0}=1$, the transformation $\dq x_{j_{i-1}}^{c_{i-1}}$
is constant, and $\dq x_{c_{i}}^{j_{i-1}}$ is a function of the parameters
of the $\left(i-1\right)$th joint (or the mobile base when $i=1$).
.

\subsubsection{Time Derivative of the Twists \label{subsec:time_derivative_twists}}

Taking the time derivative of \eqref{eq:twist_rec}, we use \eqref{eq:derivative_of_adjoint_transformation}
to obtain
\begin{multline*}
\dot{\dq{\xi}}_{0,c_{i}}^{c_{i}}=\ad{\dq x_{c_{i-1}}^{c_{i}}}{\dot{\dq{\xi}}_{0,c_{i-1}}^{c_{i-1}}}\\
+\dq{\xi}_{c_{i},c_{i-1}}^{c_{i}}\times\left(\ad{\dq x_{c_{i-1}}^{c_{i}}}{\dq{\xi}_{0,c_{i-1}}^{c_{i-1}}}\right)+\ad{\dq x_{j_{i-1}}^{c_{i}}}{\dot{\dq{\xi}}_{j_{i-1},c_{i}}^{j_{i-1}}}\\
+\dq{\xi}_{c_{i},j_{i-1}}^{c_{i}}\times\left(\ad{\dq x_{j_{i-1}}^{c_{i}}}{\dq{\xi}_{j_{i-1},c_{i}}^{j_{i-1}}}\right).
\end{multline*}
Since $\dq{\xi}_{j_{i-1},c_{i}}^{c_{i}}=-\dq{\xi}_{c_{i},j_{i-1}}^{c_{i}}$
then
\[
\dq{\xi}_{c_{i},j_{i-1}}^{c_{i}}\times\left(\ad{\dq x_{j_{i-1}}^{c_{i}}}{\dq{\xi}_{j_{i-1},c_{i}}^{j_{i-1}}}\right)=-\dq{\xi}_{c_{i},j_{i-1}}^{c_{i}}\times\dq{\xi}_{c_{i},j_{i-1}}^{c_{i}}=0.
\]
Therefore,

\begin{multline}
\dot{\dq{\xi}}_{0,c_{i}}^{c_{i}}=\ad{\dq x_{c_{i-1}}^{c_{i}}}{\dot{\dq{\xi}}_{0,c_{i-1}}^{c_{i-1}}}+\ad{\dq x_{j_{i-1}}^{c_{i}}}{\dot{\dq{\xi}}_{j_{i-1},c_{i}}^{j_{i-1}}}\\
+\dq{\xi}_{c_{i},c_{i-1}}^{c_{i}}\times\left[\ad{\dq x_{c_{i-1}}^{c_{i}}}{\dq{\xi}_{0,c_{i-1}}^{c_{i-1}}}\right],\label{eq:twist_dot_rec}
\end{multline}
where $\dot{\dq{\xi}}_{0,c_{0}}^{c_{0}}\triangleq0$. Also, since
$\dq{\xi}_{c_{i},c_{i-1}}^{j_{i-1}}=\dq{\xi}_{c_{i},j_{i-1}}^{j_{i-1}}+\dq{\xi}_{j_{i-1},c_{i-1}}^{j_{i-1}}$
and $\dq{\xi}_{j_{i-1},c_{i-1}}^{j_{i-1}}=0$, then
\begin{align}
\dq{\xi}_{c_{i},c_{i-1}}^{c_{i}} & =\ad{\dq x_{j_{i-1}}^{c_{i}}}{\dq{\xi}_{c_{i},j_{i-1}}^{j_{i-1}}}=-\ad{\dq x_{j_{i-1}}^{c_{i}}}{\dq{\xi}_{j_{i-1},c_{i}}^{j_{i-1}}}.\label{eq:twists_between_adjacent_CoM}
\end{align}

As shown in Section~\ref{subsec:Twists}, the twist $\dq{\xi}_{j_{i-1},c_{i}}^{j_{i-1}}$
depends on the type of the $i$th joint and, therefore, so does the
term $\dot{\dq{\xi}}_{j_{i-1},c_{i}}^{j_{i-1}}$. For instance, if
the $i$th joint is revolute, then $\dot{\dq{\xi}}_{j_{i-1},c_{i}}^{j_{i-1}}=\dot{\omega}_{i}\quat l_{j_{i}}^{j_{i}}$.
If it is prismatic, then $\dot{\dq{\xi}}_{j_{i-1},c_{i}}^{j_{i-1}}=\dual\dot{v}_{i}\quat l_{j_{i}}^{j_{i}}$.
Analogously, if it is helical, then $\dot{\dq{\xi}}_{j_{i-1},c_{i}}^{j_{i-1}}=\left(\dot{\omega}_{i}+\dual h\dot{\omega}_{i}\right)\quat l_{j_{i}}^{j_{i}}$,
etc. The same reasoning applies to the twist $\dot{\dq{\xi}}_{0,c_{1}}^{c_{1}}$
of the mobile base.

\selectlanguage{english}%

\selectlanguage{american}%
\begin{remark}

Although \eqref{eq:twist_dot_rec} is written in recursive form, we
can always write twists as in \eqref{eq:twist-inertial-frame} and
\eqref{eq:twist-body-frame}. Therefore, as $\dq{\xi}_{0,c_{i}}^{c_{i}}=\ad{\dq x_{0}^{c_{i}}}{\dq{\xi}_{0,c_{i}}^{0}}$,
with $\dq{\xi}_{0,c_{i}}^{0}=\quat{\omega}_{0,c_{i}}^{0}+\dual(\dot{\quat p}_{0,c_{i}}^{0}+\quat p_{0,c_{i}}^{0}\times\quat{\omega}_{0,c_{i}}^{0})$,
we use \eqref{eq:derivative_of_adjoint_transformation} to obtain
\begin{align}
\dot{\dq{\xi}}_{0,c_{i}}^{c_{i}} & {=}\ad{\dq x_{0}^{c_{i}}}{\dot{\dq{\xi}}_{0,c_{i}}^{0}{=}\dot{\quat{\omega}}_{0,c_{i}}^{c_{i}}{+}\dual\left(\ddot{\quat p}_{0,c_{i}}^{c_{i}}{+}\dot{\quat p}_{0,c_{i}}^{c_{i}}{\times}\quat{\omega}_{0,c_{i}}^{c_{i}}\right)}\label{eq:twist_derivative_explicit_form}
\end{align}
 because $\dq{\xi}_{c_{i},0}^{c_{i}}\times\ad{\dq x_{0}^{c_{i}}}{\dq{\xi}_{0,c_{i}}^{0}}=-\dq{\xi}_{0,c_{i}}^{c_{i}}\times\dq{\xi}_{0,c_{i}}^{c_{i}}=0$.
Since $\getd{\dot{\dq{\xi}}_{0,c_{i}}^{c_{i}}}=\ddot{\quat p}_{0,c_{i}}^{c_{i}}+\dot{\quat p}_{0,c_{i}}^{c_{i}}\times\quat{\omega}_{0,c_{i}}^{c_{i}}$
then
\begin{gather}
\ddot{\quat p}_{0,c_{i}}^{c_{i}}=\getd{\dot{\dq{\xi}}_{0,c_{i}}^{c_{i}}}-\getd{\dq{\xi}_{0,c_{i}}^{c_{i}}}\times\getp{\dq{\xi}_{0,c_{i}}^{c_{i}}}.\label{eq:linear-acceleration}
\end{gather}

\end{remark}

\subsection{Backward Recursion\label{subsec:dqNE_backward_recursion}}

The second process of the iterative algorithm consists in sweeping
the serial robot from the last to the first rigid body to calculate
the wrenches applied at each one of them. For the robotic arm, we
are interested in the wrenches at each joint, whereas for the mobile
base we want to find the wrench at its CoM. To that aim, we use the
twists obtained in Section~\ref{subsec:Forward-Recursion} and their
time derivatives.

Before obtaining the general expression for the backward recursion,
let us consider the mobile manipulator shown in Fig.~\ref{fig:mobile_nDOF_robot}.
The wrench at the CoM of the $n_{\ell}$th link (i.e., the $n$th
CoM in the kinematic chain, in which $n=n_{\ell}+1$), expressed in
$\frame{c_{n}}$, is given by the pure dual quaternion

\begin{equation}
\dq{\zeta}_{0,c_{n}}^{c_{n}}=\dq{\varsigma}_{0,c_{n}}^{c_{n}}-m_{n}\quat g^{c_{n}},\label{eq:wrench_cn_in_cn}
\end{equation}
where $m_{n}\quat g^{c_{n}}$ is the gravitational component, with
$\quat g^{c_{n}}\in\mathbb{H}_{p}$ being the gravity vector expressed
in $\frame{c_{n}}$, and $\dq{\varsigma}_{0,c_{n}}^{c_{n}}=\quat f_{0,c_{n}}^{c_{n}}+\dual\quat{\tau}_{0,c_{n}}^{c_{n}}$,
in which $\quat f_{0,c_{n}}^{c_{n}}=f_{x}\imi+f_{y}\imj+f_{z}\imk$
is the force at the CoM of the $n$th rigid body (i.e., the $n_{\ell}$th
link), given by Newton's second law $\quat f_{0,c_{n}}^{c_{n}}=m_{n}\ddot{\quat p}_{0,c_{n}}^{c_{n}}.$

Therefore, we use \eqref{eq:linear-acceleration} to obtain

\begin{align}
\quat f_{0,c_{n}}^{c_{n}} & =m_{n}\left(\getd{\dot{\dq{\xi}}_{0,c_{n}}^{c_{n}}}+\getp{\dq{\xi}_{0,c_{n}}^{c_{n}}}\times\getd{\dq{\xi}_{0,c_{n}}^{c_{n}}}\right).\label{eq:newton_sec_law}
\end{align}
Furthermore, $\quat{\tau}_{0,c_{n}}^{c_{n}}$ is the torque about
the $n$th rigid body's CoM due to the change of its angular momentum,
given by the Euler's rotation equation

\begin{multline}
\quat{\tau}_{0,c_{n}}^{c_{n}}=\mathcal{L}_{3}\left(\quat{\mathbb{I}}_{n}^{c_{n}}\right)\getp{\dot{\dq{\xi}}_{0,c_{n}}^{c_{n}}}\\
+\getp{\dq{\xi}_{0,c_{n}}^{c_{n}}}\times\left(\mathcal{L}_{3}\left(\quat{\mathbb{I}}_{n}^{c_{n}}\right)\getp{\dq{\xi}_{0,c_{n}}^{c_{n}}}\right),\label{eq:euler_eq}
\end{multline}
where $\mathcal{L}_{3}$ is given by \eqref{eq:operator_l} and $\quat{\mathbb{I}}_{n}^{c_{n}}$
is the quaternionic inertia tensor of the $n$th rigid body, expressed
at its CoM, given by \eqref{eq:quat_inertia_tensor}. Because \eqref{eq:euler_eq}
is calculated with respect to the CoM, the gravity acceleration does
not contribute to the torque.

Using the adjoint transformation as in \eqref{eq:adj_transf} in \eqref{eq:wrench_cn_in_cn},
the wrench at the $n_{\ell}$th joint, resulting from the wrench at
the CoM of the $n$th rigid body, is given by
\begin{align}
\dq{\zeta}_{0,j_{n}}^{j_{n-1}} & =\ad{\dq x_{c_{n}}^{j_{n-1}}}{\dq{\zeta}_{0,c_{n}}^{c_{n}}}.\label{eq:wrench_cn_in_jnMinus1}
\end{align}

The resultant wrench at the $\left(n-1\right)$th rigid body (i.e.,
at $\frame{c_{n-1}}$) also includes the effects of the wrench from
the $n$th rigid body as they are rigidly attached to each other.
Therefore, the resultant wrench at the $\left(n_{\ell}-1\right)$th
joint (i.e., at $\frame{c_{n-2}}$) is given by
\begin{alignat}{1}
\dq{\zeta}_{0,j_{n-1}}^{j_{n-2}} & =\ad{\dq x_{c_{n-1}}^{j_{n-2}}}{\dq{\zeta}_{0,c_{n-1}}^{c_{n-1}}}+\ad{\dq x_{j_{n-1}}^{j_{n-2}}}{\dq{\zeta}_{0,j_{n}}^{j_{n-1}}},\label{eq:wrench_cnMinus1_in_jnMinus2}
\end{alignat}
where $\dq{\zeta}_{0,c_{n-1}}^{c_{n-1}}=\dq{\varsigma}_{0,c_{n-1}}^{c_{n-1}}-m_{n-1}\quat g^{c_{n-1}}$,
with $\dq{\varsigma}_{0,c_{n-1}}^{c_{n-1}}=\quat f_{0,c_{n-1}}^{c_{n-1}}+\dual\quat{\tau}_{0,c_{n-1}}^{c_{n-1}}$,
is the wrench at the CoM of the $\left(n-1\right)$th rigid body expressed
in $\frame{c_{n-1}}$.

Thus, analyzing \eqref{eq:wrench_cn_in_cn}, \eqref{eq:wrench_cn_in_jnMinus1},
and \eqref{eq:wrench_cnMinus1_in_jnMinus2}, we find the backward
recurrence relation for the total wrench at the $i$th rigid body,
which includes the contribution of all wrenches starting at the CoM
of the $i$th rigid body up to the wrench at the CoM of the last one,
expressed in $\frame{j_{i-1}}$, as
\begin{align}
\dq{\zeta}_{0,j_{i}}^{j_{i-1}} & =\ad{\dq x_{c_{i}}^{j_{i-1}}}{\dq{\zeta}_{0,c_{i}}^{c_{i}}}+\ad{\dq x_{j_{i}}^{j_{i-1}}}{\dq{\zeta}_{0,j_{i+1}}^{j_{i}}},\label{eq:wrench_rec}
\end{align}
with $i\in\{1,\ldots,n\}$ and $c_{0}=j_{0}=0$, where $\dq{\zeta}_{0,j_{n_{\ell}+2}}^{j_{n_{\ell}+1}}=\dq{\zeta}_{0,j_{n+1}}^{j_{n}}=0$
(recall that $n=n_{\ell}+1$) and $\dq{\zeta}_{0,c_{i}}^{c_{i}}=\dq{\varsigma}_{0,c_{i}}^{c_{i}}-m_{i}\quat g^{c_{i}}$,
with $\dq{\varsigma}_{0,c_{i}}^{c_{i}}=\quat f_{0,c_{i}}^{c_{i}}+\dual\quat{\tau}_{0,c_{i}}^{c_{i}}$,
is the wrench at the $i$th CoM,\footnote{If an \emph{external }wrench is applied at the end-effector, then
$\dq{\zeta}_{0,j_{n+1}}^{j_{n}}\neq0$.} $\quat f_{0,c_{i}}^{c_{i}}=m_{i}\ddot{\quat p}_{0,c_{i}}^{c_{i}}$,
and $\quat{\tau}_{0,c_{i}}^{c_{i}}=\mathcal{L}_{3}\left(\quat{\mathbb{I}}_{i}^{c_{i}}\right)\getp{\dot{\dq{\xi}}_{0,c_{i}}^{c_{i}}}+\getp{\dq{\xi}_{0,c_{i}}^{c_{i}}}\times\left(\mathcal{L}_{3}\left(\quat{\mathbb{I}}_{i}^{c_{i}}\right)\getp{\dq{\xi}_{0,c_{i}}^{c_{i}}}\right)$.
For the mobile base (i.e., the first CoM in the serial kinematic chain;
hence, $i=1$), notice that \eqref{eq:wrench_rec} results in $\dq{\zeta}_{0,j_{1}}^{j_{0}}=\ad{\dq x_{c_{1}}^{j_{0}}}{\dq{\zeta}_{0,c_{1}}^{c_{1}}}+\ad{\dq x_{j_{1}}^{j_{0}}}{\dq{\zeta}_{0,j_{2}}^{j_{1}}}$,
which implies $\dq{\zeta}_{0,j_{1}}^{0}=\ad{\dq x_{c_{1}}^{0}}{\dq{\zeta}_{0,c_{1}}^{c_{1}}}+\ad{\dq x_{j_{1}}^{0}}{\dq{\zeta}_{0,j_{2}}^{j_{1}}}$.

Moreover, the transformation $\dq x_{c_{i}}^{j_{i-1}}$ is a function
of the joint or mobile base coordinates. For example, the transformation
of the mobile base, $\dq x_{c_{1}}^{j_{0}}=\dq x_{c_{1}}^{0},$ depends
on the its coordinates and rotation angle, namely $(x_{\mathrel{\mathrm{base}}},y_{\mathrel{\mathrm{base}}},\phi_{\mathrel{\mathrm{base}}})$,
therefore $\dq x_{c_{1}}^{0}\triangleq\dq x_{c_{1}}^{0}(x_{\mathrel{\mathrm{base}}},y_{\mathrel{\mathrm{base}}},\phi_{\mathrel{\mathrm{base}}})$
and $q_{1}=x_{\mathrel{\mathrm{base}}},q_{2}=y_{\mathrm{\mathrel{base}}},q_{3}=\phi_{\mathrm{\mathrel{base}}}$.
For manipulators with revolute, prismatic, or helicoidal joints, the
transformation $\dq x_{c_{i}}^{j_{i-1}}$, with $i\geq2$, is a function
of just one parameter, that is $\dq x_{c_{i}}^{j_{i-1}}\triangleq\dq x_{c_{i}}^{j_{i-1}}(q_{j_{i-1}})$.
In that case, $q_{j_{i-1}}=q_{i+2}$. Analogously, spherical and planar
joints depend on three parameters whereas helicoidal joints depend
on six parameters. Therefore, one must be careful when defining the
index for each parameter within the configuration vector $\myvec q$.

\subsubsection{Particular cases: prismatic and revolute joints}

In the case of manipulator robots with revolute and/or prismatic joints,
which are the most common ones, the wrenches given by \eqref{eq:wrench_rec}
must be projected onto the joints motion axes through
\begin{equation}
\dotproduct{\dq{\zeta}_{0,j_{i}}^{j_{i-1}},\quat l_{j_{i-1}}}=f_{\quat l_{i}}+\dual\tau_{\quat l_{i}},\label{eq:wrench_projection_to_motion_axis}
\end{equation}
where $f_{\quat l_{i}},\tau_{\quat l_{i}}\in\mathbb{R}$ and $\dotproduct{\dq{\zeta}_{0,j_{i}}^{j_{i-1}},\quat l_{j_{i-1}}}$
is the inner product between the wrench $\dq{\zeta}_{0,j_{i}}^{j_{i-1}}=\quat f_{0,j_{i}}^{j_{i-1}}+\dual\quat{\tau}_{0,j_{i}}^{j_{i-1}}$
and the motion axis $\quat l_{j_{i-1}}\in\mathbb{H}_{p}\cap\mathbb{S}^{3}$
of the $i$th joint, given by \cite{Adorno2017}
\begin{align*}
\dotproduct{\dq{\zeta}_{0,j_{i}}^{j_{i-1}},\quat l_{j_{i-1}}} & =-\frac{\left(\dq{\zeta}_{0,j_{i}}^{j_{i-1}}\quat l_{j_{i-1}}+\quat l_{j_{i-1}}\dq{\zeta}_{0,j_{i}}^{j_{i-1}}\right)}{2}\\
 & =\dotproduct{\quat f_{0,j_{i}}^{j_{i-1}},\quat l_{j_{i-1}}}+\dual\dotproduct{\quat{\tau}_{0,j_{i}}^{j_{i-1}},\quat l_{j_{i-1}}}=f_{\quat l_{i}}+\dual\tau_{\quat l_{i}}.
\end{align*}
Therefore, if the $i$th joint is revolute, then the corresponding
torque is given by $\tau_{\quat l_{i}}=\getd{\dotproduct{\dq{\zeta}_{0,j_{i}}^{j_{i-1}},\quat l_{j_{i-1}}}}$.
If it is prismatic, then the corresponding force along along the axis
$\quat l_{j_{i-1}}$ is given by $f_{\quat l_{i}}=\getp{\dotproduct{\dq{\zeta}_{0,j_{i}}^{j_{i-1}},\quat l_{j_{i-1}}}}$.

\subsubsection{Particular case: planar joints/holonomic base}

For robots with holonomic mobile bases and/or planar joints (which
are kinematically equivalent), we must project the wrenches onto all
the three axes of motion of the joint/mobile base.\footnote{Notice that this procedure applies for all joints with more than one
axis of movement (e.g., spherical joints, planar joints, etc.).} That is, the corresponding forces along the $x$-axis and $y$-axis
of the joint/mobile base are given by 
\begin{align*}
f_{\quat l_{i_{x}}} & =\getp{\dotproduct{\dq{\zeta}_{0,j_{i}}^{j_{i-1}},\ad{\quat r_{0}^{j_{i-1}}}{\imi}}},\\
f_{\quat l_{i_{y}}} & =\getp{\dotproduct{\dq{\zeta}_{0,j_{i}}^{j_{i-1}},\ad{\quat r_{0}^{j_{i-1}}}{\imj}}},
\end{align*}
whereas the corresponding torque about the $z$-axis of the joint/mobile
base is given by 
\begin{align*}
\tau_{\quat l_{z}} & =\getd{\dotproduct{\dq{\zeta}_{0,j_{i}}^{j_{i-1}},\ad{\quat r_{0}^{j_{i-1}}}{\imk}}}.
\end{align*}

\selectlanguage{english}%

\section{Gauss's Principle of Least Constraint\label{sec:Gauss-Principle}}

The GPLC \cite{Kalaba1993} is a differential variational principle,
equivalent to the D'Alembert one, that is based on the variation of
the acceleration. For a system composed of $n$ bodies, it can be
stated as the least-squares minimization problem
\begin{gather}
\minim{}{\sum_{i=1}^{n}\frac{1}{2}\left(\myvec a_{c_{i}}-\bar{\myvec a}_{c_{i}}\right)^{T}\mymatrix{\Psi}_{c_{i}}\left(\myvec a_{c_{i}}-\bar{\myvec a}_{c_{i}}\right)},\label{eq:GaussPrinciple}
\end{gather}
where $\myvec a_{c_{i}}$ and $\bar{\myvec a}_{c_{i}}$ are the accelerations
of the center of mass of the $i$th rigid body under constraints and
without constraints, respectively. Furthermore, $\mymatrix{\Psi}_{c_{i}}\triangleq\mymatrix{\Psi}_{c_{i}}\left(\bar{\mymatrix{\mathbb{I}}}_{i},m_{i}\right)$
encapsulates the inertial parameters of the $i$th rigid body, such
as the inertia tensor $\bar{\mymatrix{\mathbb{I}}}_{i}\in\mathbb{R}^{3\times3}$
and the mass $m_{i}$.

This principle has been used in robotics to describe the dynamics
of robot manipulators \cite{Bruyninckx2000a} and rigid body simulations
\cite{Redon2002}. Wieber \cite{Wieber2006} uses the GPLC to derive
the analytic expression of the Lagrangian dynamics of a humanoid robot.
Bouyarmane and Kheddar \cite{Bouyarmane2012} extend Wieber's work
by handling arbitrary parameterization of free-floating-base mechanisms.
This allows using rotation matrices or unit quaternions to represent
the free-floating-base orientations. In this section, we rewrite the
GPLC for articulated bodies, similar to Wieber's formulation \cite{Wieber2006},
but using dual quaternion algebra. This allows a more compact and
unified representation than the one by Bouyarmane and Kheddar \cite{Bouyarmane2012}.

First, we rewrite the constrained accelerations \eqref{eq:twist_dot_rec}
as a linear function of the vector of joints velocities and joints
accelerations. This allows solving \eqref{eq:GaussPrinciple} for
the joints accelerations and, therefore, additional constraints can
be included in the optimization formulation. We then define constraints
to model nonholonomic mobile manipulators. Different from \cite{Bouyarmane2012},
we do not use Lagrange multipliers. Instead, we apply Udwadia-Kalaba's
fundamental equation \cite{FirdausE.UdwadiaandRobertE.Kalaba1992},
which is a simpler method for solving quadratic optimization problems
such as \eqref{eq:GaussPrinciple}.

\subsection{Constrained acceleration $\protect\myvec a_{c_{i}}$\label{subsec:Constrained-acceleration}}

Consider the robotic system in Fig.~\ref{fig:mobile_nDOF_robot}.
The robot is composed of rigid bodies that are constrained\footnote{In this case, there are both holonomic and nonholonomic constraints.
The former are constraints between adjacent links in the kinematic
chain. The latter is the constraint of the mobile base.} to one another by joints. To express the twist $\dq{\xi}_{0,c_{i}}^{c_{i}}$
of the $i$th center of mass explicitly as a linear combination between
its Jacobian $\mymatrix J_{\dq{\xi}_{0,c_{i}}^{c_{i}}}$ and the vector
of joints velocities $\dot{\myvec q}\in\mathbb{R}^{n}$, we use the
operators $\vector_{8}:\mathcal{H}\to\mathbb{R}^{8}$, which maps
the coefficients of a dual quaternion into an eight-dimensional vector,\footnote{Given $\dq h=h_{1}+\imi h_{2}+\imj h_{3}+\imk h_{4}+\dual\left(h_{5}+\imi h_{6}+\imj h_{7}+\imk h_{8}\right)$,
$\vector_{8}\dq h=\begin{bmatrix}h_{1} & \cdots & h_{8}\end{bmatrix}^{T}$.} and $\hami +_{8}:\mathcal{H}\to\mathbb{R}^{8\times8}$, such that
$\vector_{8}\left(\dq h_{1}\dq h_{2}\right)=\hami +_{8}\left(\dq h_{1}\right)\vector_{8}\dq h_{2}$
\cite{Adorno2011}. Therefore, from \eqref{eq:x_dot} we obtain $\dq{\xi}_{0,c_{i}}^{c_{i}}=2\left(\dq x_{c_{i}}^{0}\right)^{*}\dot{\dq x}_{c_{i}}^{0}$,
which implies $\vector_{8}\dq{\xi}_{0,c_{i}}^{c_{i}}=2\hami +_{8}\left(\dq x_{0}^{c_{i}}\right)\vector_{8}\dot{\dq x}_{c_{i}}^{0}$.

Because $\dq{\xi}_{0,c_{i}}^{c_{i}}\in\mathcal{H}_{p}$, the first
and fifth elements of $\vector_{8}\dq{\xi}_{0,c_{i}}^{c_{i}}$ equal
zero, thus we also use the operator $\vector_{6}:\mathcal{H}_{p}\to\mathbb{R}^{6}$
such that $\vector_{6}\dq{\xi}_{0,c_{i}}^{c_{i}}\triangleq\bar{\mymatrix I}\vector_{8}\dq{\xi}_{0,c_{i}}^{c_{i}}$,
where
\[
\bar{\mymatrix I}\triangleq\begin{bmatrix}\myvec 0_{3\times1} & \mymatrix I_{3} & \myvec 0_{3\times1} & \myvec 0_{3\times3}\\
\myvec 0_{3\times1} & \myvec 0_{3\times3} & \myvec 0_{3\times1} & \mymatrix I_{3}
\end{bmatrix},
\]
with $\mymatrix I_{3}\in\mathbb{R}^{3\times3}$ being the identity
matrix and $\mymatrix 0_{m\times n}\in\mathbb{R}^{m\times n}$ being
a matrix of zeros. Moreover, $\vector_{8}\dot{\dq x}_{c_{i}}^{0}=\mymatrix J_{\dq x_{c_{i}}^{0}}\dot{\myvec q}_{i}$,
with $\dot{\myvec q}_{i}=\begin{bmatrix}\dot{q}_{1} & \cdots & \dot{q}_{i}\end{bmatrix}^{T}$,
and $\mymatrix J_{\dq x_{c_{i}}^{0}}\in\mathbb{R}^{8\times i}$ is
the Jacobian matrix that is obtained algebraically \cite{Adorno2011}.
Hence,
\begin{align}
\myvec{\nu}_{c_{i}}\triangleq\vector_{6}\dq{\xi}_{0,c_{i}}^{c_{i}} & =\underset{\mymatrix J_{\dq{\xi}_{0,c_{i}}^{c_{i}}}}{\underbrace{\left[\begin{array}{cc}
\bar{\mymatrix J}_{\dq{\xi}_{0,c_{i}}^{c_{i}}} & \myvec 0_{6\times(n-i)}\end{array}\right]}}\dot{\myvec q},\label{eq:ConstrainedVel}
\end{align}
where $\bar{\mymatrix J}_{\dq{\xi}_{0,c_{i}}^{c_{i}}}=2\bar{\mymatrix I}\hami +_{8}\left(\dq x_{0}^{c_{i}}\right)\mymatrix J_{\dq x_{c_{i}}^{0}}\in\mathbb{R}^{6\times i}$.

Finally, the constrained acceleration of the $i$th center of mass
is given by
\begin{align}
\myvec a_{c_{i}} & \triangleq\vector_{6}\dot{\dq{\xi}}_{0,c_{i}}^{c_{i}}=\mymatrix J_{\dq{\xi}_{0,c_{i}}^{c_{i}}}\ddot{\myvec q}+\dot{\mymatrix J}_{\dq{\xi}_{0,c_{i}}^{c_{i}}}\dot{\myvec q}.\label{eq:ConstrainedAcceleration}
\end{align}

We recall that \eqref{eq:ConstrainedAcceleration} is equivalent to
\eqref{eq:twist_dot_rec} as the Newton-Euler formalism implicitly
considers the linkage constraints of the bodies.

\subsection{Unconstrained acceleration $\bar{\protect\myvec a}_{c_{i}}$}

Consider $\dq x_{c_{i}}^{0}=\quat r_{c_{i}}^{0}+(1/2)\dual\quat p_{0,c_{i}}^{0}\quat r_{c_{i}}^{0}$,
which represents the rigid motion from $\frame 0$ to $\frame{c_{i}}$,
and the twist $\overline{\dq{\xi}}_{0,c_{i}}^{c_{i}}$ at $\frame{c_{i}}$of
the $i$th body under no constraints. From \eqref{eq:twist_derivative_explicit_form},
the unconstrained acceleration is given explicitly as
\begin{align}
\bar{\myvec a}_{c_{i}} & \triangleq\vector_{6}\dot{\overline{\dq{\xi}}}_{0,c_{i}}^{c_{i}}=\left[\begin{array}{c}
\vector_{3}\dot{\quat{\omega}}_{0,c_{i}}^{c_{i}}\\
\vector_{3}\left(\ddot{\quat p}_{0,c_{i}}^{c_{i}}+\dot{\quat p}_{0,c_{i}}^{c_{i}}\times\quat{\omega}_{0,c_{i}}^{c_{i}}\right)
\end{array}\right],\label{eq.UnconstrainedAcceleration}
\end{align}
where $\vector_{3}:\mathbb{H}_{p}\to\mathbb{R}^{3}$ such that $\vector_{3}(a\imi+b\imj+c\imk)=\begin{bmatrix}a & b & c\end{bmatrix}^{T}$.
Whereas \eqref{eq:ConstrainedAcceleration} depends on $\myvec q$,
$\dot{\myvec q}$, and $\ddot{\myvec q}$, Eq.~\eqref{eq.UnconstrainedAcceleration}
does not because it is unconstrained.

\subsection{Euler Lagrange equations\label{subsec:Euler-Lagrange-equations}}

Let $\mathcal{G}\left(\myvec q,\dot{\myvec q},\ddot{\myvec q}\right)=\sum_{i=1}^{n}\frac{1}{2}\left(\myvec a_{c_{i}}-\bar{\myvec a}_{c_{i}}\right)^{T}\mymatrix{\Psi}_{c_{i}}\left(\myvec a_{c_{i}}-\bar{\myvec a}_{c_{i}}\right)$,
in which $\myvec a_{c_{i}}$ and $\bar{\myvec a}_{c_{i}}$ are given
by \eqref{eq:ConstrainedAcceleration} and \eqref{eq.UnconstrainedAcceleration},
where $\mymatrix{\Psi}_{c_{i}}\triangleq\mathrm{blkdiag}\left(\bar{\mymatrix{\mathbb{I}}}_{i}^{c_{i}},m_{i}\mymatrix I_{3}\right)$.

Expanding $\mathcal{G}\left(\myvec q,\dot{\myvec q},\ddot{\myvec q}\right)$,
we obtain
\begin{align}
\mathcal{G}\left(\myvec q,\dot{\myvec q},\ddot{\myvec q}\right) & =\underset{i=1}{\sum^{n}}\left(\mathcal{G}_{a_{i}}\left(\myvec q,\dot{\myvec q},\ddot{\myvec q}\right)+\mathcal{G}_{b_{i}}\left(\myvec q,\dot{\myvec q}\right)\right),\label{eq:ObjectiveFunctionRo}
\end{align}
where\footnote{Notice that $\ddot{\myvec q}^{T}\mymatrix J_{\dq{\xi}_{0,c_{i}}^{c_{i}}}^{T}\mymatrix{\Psi}_{c_{i}}\dot{\mymatrix J}_{\dq{\xi}_{0,c_{i}}^{c_{i}}}\dot{\myvec q}=\dot{\myvec q}^{T}\dot{\mymatrix J}_{\dq{\xi}_{0,c_{i}}^{c_{i}}}^{T}\mymatrix{\Psi}_{c_{i}}\mymatrix J_{\dq{\xi}_{0,c_{i}}^{c_{i}}}\ddot{\myvec q}$
and $\ddot{\myvec q}^{T}\mymatrix J_{\dq{\xi}_{0,c_{i}}^{c_{i}}}^{T}\mymatrix{\Psi}_{c_{i}}\bar{\myvec a}_{c_{i}}=\bar{\myvec a}_{c_{i}}^{T}\mymatrix{\Psi}_{c_{i}}\mymatrix J_{\dq{\xi}_{0,c_{i}}^{c_{i}}}\ddot{\myvec q}$.} $\mathcal{G}_{a_{i}}\left(\myvec q,\dot{\myvec q},\ddot{\myvec q}\right)\triangleq\frac{1}{2}\ddot{\myvec q}^{T}\mymatrix J_{\dq{\xi}_{0,c_{i}}^{c_{i}}}^{T}\mymatrix{\Psi}_{c_{i}}\mymatrix J_{\dq{\xi}_{0,c_{i}}^{c_{i}}}\ddot{\myvec q}+\dot{\myvec q}^{T}\dot{\mymatrix J}_{\dq{\xi}_{0,c_{i}}^{c_{i}}}^{T}\mymatrix{\Psi}_{c_{i}}\mymatrix J_{\dq{\xi}_{0,c_{i}}^{c_{i}}}\ddot{\myvec q}-\ddot{\myvec q}^{T}\mymatrix J_{\dq{\xi}_{0,c_{i}}^{c_{i}}}^{T}\mymatrix{\Psi}_{c_{i}}\bar{\myvec a}_{c_{i}}$
and $\mathcal{G}_{b_{i}}\left(\myvec q,\dot{\myvec q}\right)\triangleq\frac{1}{2}\bar{\myvec a}_{c_{i}}^{T}\mymatrix{\Psi}_{c_{i}}\bar{\myvec a}_{c_{i}}+\frac{1}{2}\dot{\myvec q}^{T}\dot{\mymatrix J}_{\dq{\xi}_{0,c_{i}}^{c_{i}}}^{T}\mymatrix{\Psi}_{c_{i}}\dot{\mymatrix J}_{\dq{\xi}_{0,c_{i}}^{c_{i}}}\dot{\myvec q}-\dot{\myvec q}^{T}\dot{\mymatrix J}_{\dq{\xi}_{0,c_{i}}^{c_{i}}}^{T}\mymatrix{\Psi}_{c_{i}}\bar{\myvec a}_{c_{i}}.$

From the optimality condition, the solution of \eqref{eq:GaussPrinciple}
is computed as \cite{Wieber2006}
\begin{align}
\frac{\partial\mathcal{G}\left(\myvec q,\dot{\myvec q},\ddot{\myvec q}\right)}{\partial\ddot{\myvec q}} & =\frac{\partial}{\partial\ddot{\myvec q}}\left(\underset{i=1}{\sum^{n}}\mathcal{G}_{a_{i}}\left(\myvec q,\dot{\myvec q},\ddot{\myvec q}\right)\right)=\myvec 0_{1\times n}.\label{eq:optimalityCond}
\end{align}

Using \eqref{eq:ObjectiveFunctionRo} in \eqref{eq:optimalityCond},
we obtain
\begin{gather}
\myvec 0_{n\times1}=\underset{i=1}{\sum^{n}}\left(\mymatrix J_{\dq{\xi}_{0,c_{i}}^{c_{i}}}^{T}\mymatrix{\Psi}_{c_{i}}\mymatrix J_{\dq{\xi}_{0,c_{i}}^{c_{i}}}\ddot{\myvec q}+\mymatrix J_{\dq{\xi}_{0,c_{i}}^{c_{i}}}^{T}\mymatrix{\Psi}_{c_{i}}\dot{\mymatrix J}_{\dq{\xi}_{0,c_{i}}^{c_{i}}}\dot{\myvec q}+\Phi\right),\label{eq:JacZero}
\end{gather}
where $\Phi\triangleq-\mymatrix J_{\dq{\xi}_{0,c_{i}}^{c_{i}}}^{T}\mymatrix{\Psi}_{c_{i}}\bar{\myvec a}_{c_{i}}.$

Since $\mymatrix J_{\dq{\xi}_{0,c_{i}}^{c_{i}}}=\left[\begin{array}{cc}
\mymatrix J_{\getp{\dq{\xi}_{0,c_{i}}^{c_{i}}}}^{T} & \mymatrix J_{\getd{\dq{\xi}_{0,c_{i}}^{c_{i}}}}^{T}\end{array}\right]^{T}$, using \eqref{eq.UnconstrainedAcceleration} and the elements $\bar{\mymatrix{\mathbb{I}}}_{i}^{c_{i}}$
and $m_{i}$ of $\mymatrix{\Psi}_{c_{i}}$, the term $\Phi$ from
\eqref{eq:JacZero} can be rewritten as
\begin{multline}
\Phi=-\mymatrix J_{\getp{\dq{\xi}_{0,c_{i}}^{c_{i}}}}^{T}\bar{\mymatrix{\mathbb{I}}}_{i}^{c_{i}}\vector_{3}\dot{\quat{\omega}}_{0,c_{i}}^{c_{i}}-\mymatrix J_{\getd{\dq{\xi}_{0,c_{i}}^{c_{i}}}}^{T}\vector_{3}\quat f_{0,c_{i}}^{c_{i}}\\
-m_{i}\mymatrix J_{\getd{\dq{\xi}_{0,c_{i}}^{c_{i}}}}^{T}\vector_{3}\left(\dot{\quat p}_{0,c_{i}}^{c_{i}}\times\quat{\omega}_{0,c_{i}}^{c_{i}}\right),\label{eq:Phi}
\end{multline}
where $\vector_{3}\left(\dot{\quat p}_{0,c_{i}}^{c_{i}}\times\quat{\omega}_{0,c_{i}}^{c_{i}}\right)=-\mymatrix S\left(\quat{\omega}_{0,c_{i}}^{c_{i}}\right)\vector_{3}\dot{\quat p}_{0,c_{i}}^{c_{i}}$,
with $\vector_{3}\dot{\quat p}_{0,c_{i}}^{c_{i}}{=\,}\mymatrix J_{\getd{\dq{\xi}_{0,c_{i}}^{c_{i}}}}\dot{\myvec q}$,
$\quat f_{0,c_{i}}^{c_{i}}{=\,}m_{i}\ddot{\quat p}_{0,c_{i}}^{c_{i}}$,
and $\mymatrix S\left(\cdot\right){\in\,}\mathrm{so(3)}$ is the skew-symmetric
matrix used as an operator that performs the cross-product \cite{spong2006robot}.

Furthermore, as $\vector_{3}\quat{\omega}_{0,c_{i}}^{c_{i}}=\mymatrix J_{\getp{\dq{\xi}_{0,c_{i}}^{c_{i}}}}\dot{\myvec q}$,
for convenience's sake we use the $\vector_{3}$ operator to rewrite
\eqref{eq:euler_eq} as
\begin{equation}
\bar{\mymatrix{\mathbb{I}}}_{i}^{c_{i}}\vector_{3}\dot{\quat{\omega}}_{0,c_{i}}^{c_{i}}{=}\vector_{3}\quat{\tau}_{0,c_{i}}^{c_{i}}{+}\mymatrix S\left(\myvec s_{c_{i}}\right)\mymatrix J_{\getp{\dq{\xi}_{0,c_{i}}^{c_{i}}}}\dot{\myvec q},\label{eq:Euler_eq2}
\end{equation}
where $\myvec s_{c_{i}}\triangleq\bar{\mymatrix{\mathbb{I}}}_{i}^{c_{i}}\vector_{3}\quat{\omega}_{0,c_{i}}^{c_{i}}$,
and use it in \eqref{eq:Phi} to obtain

\begin{multline}
\Phi=-\mymatrix J_{\getp{\dq{\xi}_{0,c_{i}}^{c_{i}}}}^{T}\vector_{3}\quat{\tau}_{0,c_{i}}^{c_{i}}-\mymatrix J_{\getd{\dq{\xi}_{0,c_{i}}^{c_{i}}}}^{T}\vector_{3}\quat f_{0,c_{i}}^{c_{i}}\\
+\mymatrix J_{\dq{\xi}_{0,c_{i}}^{c_{i}}}^{T}\overline{\mymatrix S}\left(\quat{\omega}_{0,c_{i}}^{c_{i}},\mymatrix{\Psi}_{c_{i}}\right)\mymatrix J_{\dq{\xi}_{0,c_{i}}^{c_{i}}}\dot{\myvec q},\label{eq:Phi_final}
\end{multline}
with
\begin{equation}
\overline{\mymatrix S}\left(\quat{\omega}_{0,c_{i}}^{c_{i}},\mymatrix{\Psi}_{c_{i}}\right)\triangleq\textrm{blkdiag\ensuremath{\left(-\mymatrix S\left(\myvec s_{c_{i}}\right),m_{i}\mymatrix S\left(\quat{\omega}_{0,c_{i}}^{c_{i}}\right)\right)}}.\label{eq:SkewSymetricMatrix}
\end{equation}

Finally, using \eqref{eq:Phi_final} in \eqref{eq:JacZero} yields
\begin{align}
\mymatrix M_{\text{GP}}\ddot{\myvec q}+\mymatrix C_{\text{GP}}\dot{\myvec q} & =\myvec{\bar{\tau}}_{\mathrm{GP}},\label{eq:EulerLagrangeEq}
\end{align}
where $\mymatrix M_{\text{GP}}\triangleq\mymatrix M_{\text{GP}}\left(\myvec q\right)\in\mathbb{R}^{n\times n}$
is the inertia matrix, $\mymatrix C_{\text{GP}}\triangleq\mymatrix C_{\text{GP}}\left(\myvec q,\dot{\myvec q}\right)\in\mathbb{R}^{n\times n}$
denotes the nonlinear dynamic effects (including the Coriolis terms),
and $\myvec{\bar{\tau}}_{\mathrm{GP}}\triangleq\myvec{\bar{\tau}}_{\mathrm{GP}}\left(\myvec q\right)\in\mathbb{R}^{n}$
represents the generalized forces acting on the system; also, 
\begin{align}
\mymatrix M_{\text{GP}} & \triangleq\underset{i=1}{\sum^{n}}\mymatrix J_{\dq{\xi}_{0,c_{i}}^{c_{i}}}^{T}\mymatrix{\Psi}_{c_{i}}\mymatrix J_{\dq{\xi}_{0,c_{i}}^{c_{i}}},\label{eq:InertiaMatrix}\\
\mymatrix C_{\text{GP}} & \triangleq\underset{i=1}{\sum^{n}}\mymatrix J_{\dq{\xi}_{0,c_{i}}^{c_{i}}}^{T}\left(\overline{\mymatrix S}\left(\quat{\omega}_{0,c_{i}}^{c_{i}},\mymatrix{\Psi}_{c_{i}}\right)\mymatrix J_{\dq{\xi}_{0,c_{i}}^{c_{i}}}+\mymatrix{\Psi}_{c_{i}}\dot{\mymatrix J}_{\dq{\xi}_{0,c_{i}}^{c_{i}}}\right),\label{eq:CoriolisMatrix}\\
\myvec{\bar{\tau}}_{\mathrm{GP}} & \triangleq\underset{i=1}{\sum^{n}}\mymatrix J_{\dq{\xi}_{0,c_{i}}^{c_{i}}}^{T}\myvec{\bar{\varsigma}}_{c_{i}},\label{eq:GeneralizedForcesVector}
\end{align}
where $\myvec{\bar{\varsigma}}_{c_{i}}$ is the wrench at the $i$th
center of mass, defined as
\begin{equation}
\myvec{\bar{\varsigma}}_{c_{i}}\triangleq\begin{bmatrix}\mymatrix 0_{3\times3} & \mymatrix I_{3\times3}\\
\mymatrix I_{3\times3} & \mymatrix 0_{3\times3}
\end{bmatrix}\vector_{6}\dq{\zeta}_{0,c_{i}}^{c_{i}},\label{eq:torques_c_i}
\end{equation}
with $\dq{\zeta}_{0,c_{i}}^{c_{i}}=\quat f_{0,c_{i}}^{c_{i}}+\varepsilon\quat{\tau}_{0,c_{i}}^{c_{i}}$.

Furthermore, since the gravity does not generate any resultant moment
at the center of mass of a link, the vector of gravitational forces
$\myvec{\tau}_{g}\triangleq\myvec{\tau}_{g}\left(\myvec q\right)$
is obtained from $\myvec{\bar{\tau}}_{\mathrm{GP}}$ by letting $\quat{\tau}_{0,c_{i}}^{c_{i}}=0$
and $\quat f_{0,c_{i}}^{c_{i}}=\ad{\quat r_{0}^{c_{i}}}{\quat f_{g_{i}}}$,
where $\quat f_{g_{i}}=m_{i}\myvec g$ and $\quat g\in\mathbb{H}_{p}$
are the gravitational force and gravitational acceleration, respectively,
both expressed in the inertial frame. Hence,
\begin{align}
\myvec{\tau}_{g} & =\underset{i=1}{\sum^{n}}\mymatrix J_{\getd{\dq{\xi}_{0,c_{i}}^{c_{i}}}}^{T}\vector_{3}\left(\ad{\quat r_{0}^{c_{i}}}{\quat f_{g_{i}}}\right).\label{eq:GravityForcesVector}
\end{align}

By considering the generalized forces $\myvec{\tau}_{\textrm{GP}}$
applied at the joints and the gravitational forces $\myvec{\tau}_{g}$,
the resultant forces acting on the system are $\myvec{\bar{\tau}}_{\mathrm{GP}}=\myvec{\tau}_{\textrm{GP}}+\myvec{\tau}_{g}$.
Let $\myvec g_{\textrm{GP}}\triangleq-\myvec{\tau}_{g}$, then \eqref{eq:EulerLagrangeEq}
is rewritten in the canonical form as

\begin{align}
\mymatrix M_{\text{GP}}\ddot{\myvec q}+\mymatrix C_{\text{GP}}\dot{\myvec q}+\myvec g_{\textrm{GP}} & =\myvec{\tau}_{\textrm{GP}}.\label{eq:CanonicalEulerLagrange}
\end{align}
In this way, solving \eqref{eq:GaussPrinciple} leads to the Euler-Lagrange
dynamic description of a mechanical system by means of dual quaternion
algebra. Once again, we assume that the robot forward kinematics and
differential kinematics are available in dual quaternion space \cite{Adorno2011}.

\begin{remark}

Let $\mymatrix A\triangleq\left(1/2\right)\dot{\mymatrix M}_{\text{GP}}-\mymatrix C_{\text{GP}}$,
then 
\begin{gather*}
\mymatrix A=-\underset{i=1}{\sum^{n}}\mymatrix J_{\dq{\xi}_{0,c_{i}}^{c_{i}}}^{T}\overline{\mymatrix S}\left(\quat{\omega}_{0,c_{i}}^{c_{i}},\mymatrix{\Psi}_{c_{i}}\right)\mymatrix J_{\dq{\xi}_{0,c_{i}}^{c_{i}}.}
\end{gather*}
Since $\overline{\mymatrix S}\left(\quat{\omega}_{0,c_{i}}^{c_{i}},\mymatrix{\Psi}_{c_{i}}\right)$
is skew-symmetric by construction, then $\mymatrix A^{T}=\mymatrix{-A}$,
which implies
\begin{align}
\myvec u^{T}\left(\frac{1}{2}\dot{\mymatrix M}_{\text{GP}}\left(\myvec q\right)-\mymatrix C_{\text{GP}}\left(\myvec q,\dot{\myvec q}\right)\right)\myvec u & =0\label{eq:SkewProperty}
\end{align}
for all $\myvec q,\dot{\myvec q},\myvec u\in\mathbb{R}^{n}$. Property
\eqref{eq:SkewProperty} is useful to show formal closed-loop stability
in robot dynamic control using strategies based on Lyapunov functions
\cite{Kelly2005}.

\end{remark}

\subsection{Connections with the Gibbs-Appell and Kane's equations}

The Gibbs-Appell and Kane's equations have proven to be a powerful
mathematical tool to describe both unconstrained and constrained mechanical
systems without the use of Lagrange multipliers \cite{Storch1989,Honein2021}.
Both are different ways to get the equations of motion, but equivalent
in the sense that a set of equations implies the other.\cite{Townsend1992,Desloge1987_,Levinson87}.

The Gibbs-Appell method is closely related with the Gauss's Principle
of Least Constraint, since both approaches use scalar quadratic functions
in terms of accelerations. The former can be seen as a generalization
of the latter \cite{Ray1972,JRay92}. However, they are equivalent
and both can be derived from the other \cite{Ray1972,Lewis1996,Udwadia1998}.
Nonetheless, different from the Gibbs-Appell and Kane's equations,
the Gauss's principle strategy does not require setting up quasi-velocities
and allows taking into account additional constraints directly in
the optimization formulation.

Now, we rewrite the Gibbs-Appell and Kane's equations using the equations
derived in Sections~\ref{subsec:Constrained-acceleration}--\ref{subsec:Euler-Lagrange-equations}.
Furthermore, we show that the Euler-Lagrange dynamic description of
a mechanical system can be shown to be a particular case of the Gibbs-Appell
and Kane's equations. This is done by selecting the quasi-velocities
to be the same as the generalized velocities.

For $n$ rigid bodies, the Gibbs-Appell equations are given by \cite{Honein2021}
\begin{equation}
\frac{\partial S\left(\myvec q,\dot{\myvec q},\ddot{\myvec q}\right)}{\partial\dot{\myvec u}}=\underset{\myvec{\bar{\tau}}_{\mathrm{GA}}}{\underbrace{\underset{i=1}{\sum^{n}}\left(\frac{\partial}{\partial\myvec u}\myvec{\nu}_{c_{i}}\right)^{T}\myvec{\bar{\varsigma}}_{c_{i}}}},\label{eq:gibbs_appell_init}
\end{equation}
where 
\begin{equation}
S\left(\myvec q,\dot{\myvec q},\ddot{\myvec q}\right)\triangleq\underset{i=1}{\sum^{n}}\left(\frac{1}{2}\myvec a_{c_{i}}^{T}\mymatrix{\Psi}_{c_{i}}\myvec a_{c_{i}}+\myvec a_{c_{i}}^{T}\overline{\mymatrix S}\left(\quat{\omega}_{0,c_{i}}^{c_{i}},\mymatrix{\Psi}_{c_{i}}\right)\myvec{\nu}_{c_{i}}\right)\label{eq:Gibbs-Abbel scalar.}
\end{equation}
is a scalar function of the configuration $\myvec q$, configuration
velocity $\dot{\myvec q}$, and configuration acceleration $\ddot{\myvec q}$.
The vector $\myvec{\bar{\tau}}_{\mathrm{GA}}$ contains generalized
forces associated with the quasi-velocities $\myvec u$. Furthermore,
$\myvec{\nu}_{c_{i}}$ and $\myvec{\bar{\varsigma}}_{c_{i}}$ are
the twist and the generalized forces of the $i$th body, given by
\eqref{eq:ConstrainedVel} and \eqref{eq:torques_c_i}, respectively.

We let $\myvec u\triangleq\dot{\myvec q}$, and use \eqref{eq:ConstrainedAcceleration}
in \eqref{eq:Gibbs-Abbel scalar.}. We take the result and apply the
partial derivative $\partial S/\partial\ddot{\myvec q}$, and then
compare with \eqref{eq:InertiaMatrix} and \eqref{eq:CoriolisMatrix}
to obtain
\begin{equation}
\frac{\partial S\left(\myvec q,\dot{\myvec q},\ddot{\myvec q}\right)}{\partial\ddot{\myvec q}}=\mymatrix M_{\text{GP}}\ddot{\myvec q}+\mymatrix C_{\text{GP}}\dot{\myvec q}.\label{eq:gibbs_appell_right}
\end{equation}

Using \eqref{eq:ConstrainedVel}, we obtain
\begin{equation}
\frac{\partial}{\partial\myvec u}\myvec{\nu}_{c_{i}}=\frac{\partial}{\partial\dot{\myvec q}}\left(\mymatrix J_{\dq{\xi}_{0,c_{i}}^{c_{i}}}\dot{\myvec q}\right)=\mymatrix J_{\dq{\xi}_{0,c_{i}}^{c_{i}}}.\label{eq:partial_velocities}
\end{equation}

Therefore, we compute the generalized forces as $\myvec{\bar{\tau}}_{\mathrm{GA}}=\sum_{i=1}^{n}\mymatrix J_{\dq{\xi}_{0,c_{i}}^{c_{i}}}^{T}\myvec{\bar{\varsigma}}_{c_{i}}=\myvec{\bar{\tau}}_{\mathrm{GP}}$
(see \eqref{eq:GeneralizedForcesVector}). In this way, solving \eqref{eq:gibbs_appell_init}
leads to the Euler-Lagrange equation, which is identical to the one
obtained by using the Gauss's principle \eqref{eq:EulerLagrangeEq}.

On the other hand, the Kane's equation of motion is given by \cite{Kane1983Dynamics}
\begin{equation}
\myvec{\varphi}-\bar{\myvec{\varphi}}=\myvec 0,\label{eq:Kane_equations}
\end{equation}
where $\myvec{\varphi}$ contains the generalized active forces and
$\bar{\myvec{\varphi}}$ contains the generalized inertia forces.

We obtain the equation of motion using Kane's method by grouping
both the Newton's and Euler's equations for $n$ rigid bodies as follows\footnote{The the Newton's equation is given by $m_{i}\vector_{3}\ddot{\quat p}_{0,c_{i}}^{c_{i}}{=}\vector_{3}\quat f_{0,c_{i}}^{c_{i}}$
and the Euler's equation is given by \eqref{eq:Euler_eq2}.}
\begin{equation}
\underset{i=1}{\sum^{n}}\left[\begin{array}{c}
\left(\bar{\mymatrix{\mathbb{I}}}_{i}^{c_{i}}\vector_{3}\dot{\quat{\omega}}_{0,c_{i}}^{c_{i}}-\mymatrix S\left(\myvec s_{c_{i}}\right)\vector_{3}\quat{\omega}_{0,c_{i}}^{c_{i}}\right)\\
m_{i}\vector_{3}\ddot{\quat p}_{0,c_{i}}^{c_{i}}
\end{array}\right]=\underset{i=1}{\sum^{n}}\underset{\myvec{\bar{\varsigma}}_{c_{i}}}{\underbrace{\left[\begin{array}{c}
\vector_{3}\quat{\tau}_{0,c_{i}}^{c_{i}}\\
\vector_{3}\quat f_{0,c_{i}}^{c_{i}}
\end{array}\right]}}.\label{eq:FundamentalEquations-1}
\end{equation}

Using \eqref{eq:SkewSymetricMatrix}, and the fact that $\myvec{\nu}_{c_{i}}=\vector_{6}\dq{\xi}_{0,c_{i}}^{c_{i}}$,with
$\dq{\xi}_{0,c_{i}}^{c_{i}}\triangleq\dq{\xi}_{0,c_{i}}^{c_{i}}\left(\myvec q\right)$
given as in \eqref{eq:twist-body-frame}, $\myvec a_{c_{i}}=\vector_{6}\dot{\dq{\xi}}_{0,c_{i}}^{c_{i}}$,
with $\dot{\dq{\xi}}_{0,c_{i}}^{c_{i}}\triangleq\dot{\dq{\xi}}_{0,c_{i}}^{c_{i}}\left(\myvec q,\dot{\myvec q}\right)$
given as in \eqref{eq.UnconstrainedAcceleration},\footnote{Notice that, although $\myvec a_{c_{i}}$ is analogous to \eqref{eq.UnconstrainedAcceleration},
it refers to the actual accelerations and, therefore, the constrained
ones. Consequently, $\myvec a_{c_{i}}$ depends on $\myvec q$ and
$\dot{\myvec q}$, whereas \eqref{eq.UnconstrainedAcceleration} does
not.} and $\vector_{3}\left(\dot{\quat p}_{0,c_{i}}^{c_{i}}\times\quat{\omega}_{0,c_{i}}^{c_{i}}\right)=-\mymatrix S\left(\quat{\omega}_{0,c_{i}}^{c_{i}}\right)\vector_{3}\dot{\quat p}_{0,c_{i}}^{c_{i}}$,
we rewrite \eqref{eq:FundamentalEquations-1} as
\begin{equation}
\underset{i=1}{\sum^{n}}\underset{\kappa_{i}}{\underbrace{\left(\mymatrix{\Psi}_{c_{i}}\myvec a_{c_{i}}+\overline{\mymatrix S}\left(\quat{\omega}_{0,c_{i}}^{c_{i}},\mymatrix{\Psi}_{c_{i}}\right)\myvec{\nu}_{c_{i}}\right)}}=\underset{i=1}{\sum^{n}}\myvec{\bar{\varsigma}}_{c_{i}}.\label{eq:FundamentalEquations}
\end{equation}

Since $\kappa_{i}=\bar{\varsigma}_{c_{i}}$ for $i\in\{1,\ldots,n\}$,
we multiply each of the $n$ terms $\kappa_{i}$ and $\bar{\varsigma}_{c_{i}}$
from \eqref{eq:FundamentalEquations} by $\left(\frac{\partial}{\partial\myvec u}\myvec{\nu}_{c_{i}}\right)^{T}=\mymatrix J_{\dq{\xi}_{0,c_{i}}^{c_{i}}}^{T},$
to obtain the Kane's equations \cite{Kane1983Dynamics}, which yields
\begin{align}
\underset{\myvec{\varphi}}{\underbrace{\underset{i=1}{\sum^{n}}\mymatrix J_{\dq{\xi}_{0,c_{i}}^{c_{i}}}^{T}\left(\mymatrix{\Psi}_{c_{i}}\myvec a_{c_{i}}+\overline{\mymatrix S}\left(\quat{\omega}_{0,c_{i}}^{c_{i}},\mymatrix{\Psi}_{c_{i}}\right)\myvec{\nu}_{c_{i}}\right)}} & =\underset{\bar{\myvec{\varphi}}}{\underbrace{\underset{i=1}{\sum^{n}}\mymatrix J_{\dq{\xi}_{0,c_{i}}^{c_{i}}}^{T}\myvec{\bar{\varsigma}}_{c_{i}}}}.\label{eq:KaneEquations}
\end{align}

Finally, using \eqref{eq:ConstrainedVel}, \eqref{eq:ConstrainedAcceleration},
\eqref{eq:InertiaMatrix}, \eqref{eq:CoriolisMatrix} and \eqref{eq:GeneralizedForcesVector}
in \eqref{eq:KaneEquations} we obtain $\myvec{\varphi}=\mymatrix M_{\text{GP}}\ddot{\myvec q}+\mymatrix C_{\text{GP}}\dot{\myvec q}$
and $\bar{\myvec{\varphi}}=\myvec{\bar{\tau}}_{\mathrm{GP}}$, which
is the same dynamic equation as the one obtained by the Gauss's principle
\eqref{eq:EulerLagrangeEq}, as expected.

Therefore, when considering the quasi-velocities to be the same as
the generalized velocities (i.e., $\myvec u\triangleq\dot{\myvec q}$)
the relations between Gauss's principle, Gibbs-Appell equations and
Kane's method are given by
\begin{equation}
\frac{\partial\mathcal{G}\left(\myvec q,\dot{\myvec q},\ddot{\myvec q}\right)}{\partial\ddot{\myvec q}}=\frac{\partial S\left(\myvec q,\dot{\myvec q},\ddot{\myvec q}\right)}{\partial\ddot{\myvec q}}-\myvec{\bar{\tau}}_{\mathrm{GA}}=\myvec{\varphi}-\bar{\myvec{\varphi}}=\myvec 0_{n\times1}.\label{eq:EquivalenceGibbsGauss}
\end{equation}

\subsection{Constrained Robotic Systems using the Gauss's Principle of Least
Constraint\label{subsec:Constrained-Robotic-Systems}}

Additional constraints can be imposed in the GPLC formulation. This
can be done by means of Lagrange multipliers \cite{Bouyarmane2012}
or using the Udwadia-Kalaba formulation \cite{FirdausE.UdwadiaandRobertE.Kalaba1992}.
The former requires the computation of the Lagrange multipliers, whereas
the latter employs a simpler method, albeit equivalent, which is based
on generalized inverses as the solution to a constrained quadratic
program.

Using the Udwadia-Kalaba formulation \cite{FirdausE.UdwadiaandRobertE.Kalaba1992},
additional constraints in the form $\mymatrix A\ddot{\myvec q}=\myvec b$,
with $\mymatrix A\triangleq\mymatrix A(\myvec q,\dot{\myvec q})$
and $\myvec b\triangleq\myvec b(\myvec q,\dot{\myvec q})$, are taken
into account in \eqref{eq:EulerLagrangeEq} as follows
\begin{align}
\mymatrix M_{\textrm{GP}}\ddot{\myvec q} & =\mymatrix Q\left(\myvec q,\dot{\myvec q}\right)+\text{\ensuremath{\mymatrix Q}}_{c}\left(\myvec q,\dot{\myvec q}\right),\label{eq:UKformulation}
\end{align}
where $\mymatrix Q\left(\myvec q,\dot{\myvec q}\right)\triangleq\myvec{\bar{\tau}}_{\mathrm{GP}}-\mymatrix C_{\text{GP}}\dot{\myvec q}$,
and the additional term $\text{\ensuremath{\mymatrix Q}}_{c}\left(\myvec q,\dot{\myvec q}\right)\triangleq\mymatrix M_{\textrm{GP}}^{1/2}\mymatrix D^{+}\left(\myvec b-\mymatrix A\mymatrix M_{\textrm{GP}}^{-1}\mymatrix Q\left(\myvec q,\dot{\myvec q}\right)\right)$
represents the constraint force due to the additional constraints.
Furthermore, $\mymatrix D^{+}$ is the Moore-Penrose generalized inverse
\cite{spong2006robot} of $\mymatrix D$, with $\mymatrix D\triangleq\mymatrix A\mymatrix M_{\textrm{GP}}^{-1/2}$.

Finally, letting $\mymatrix{\Omega}\triangleq\left(\mymatrix I-\mymatrix M_{\textrm{GP}}^{1/2}\mymatrix D^{+}\mymatrix A\mymatrix M_{\textrm{GP}}^{-1}\right)$,
we rewrite \eqref{eq:UKformulation} as
\begin{equation}
\mymatrix M_{\textrm{GP}}\ddot{\myvec q}+\mymatrix{\Omega}\mymatrix C_{\textrm{GP}}\dot{\myvec q}-\mymatrix M_{\textrm{GP}}^{1/2}\mymatrix D^{+}\myvec b=\mymatrix{\Omega}\myvec{\bar{\tau}}_{\mathrm{GP}}.\label{eq:FinalGaussUK}
\end{equation}

For example, consider the well-known differential-drive mobile robot,
in which the nonholonomic constraint ensures the conditions of pure
rolling and non-slipping movements \cite{Fierro1997_}. The robot
configuration is specified by the vector $\myvec q=\left[\begin{array}{ccc}
x & y & \phi\end{array}\right]^{T}$, where $x,y$ is the position coordinates and $\phi$ is the orientation
of the robot on the plane. The nonholonomic constraint is given by
\begin{equation}
\underset{\mymatrix A}{\underbrace{\begin{bmatrix}-\sin\phi & \cos\phi & 0\end{bmatrix}}}\dot{\myvec q}=0,\label{eq:first-order-nonholonomic-constraint}
\end{equation}
which can be enforced in \eqref{eq:FinalGaussUK} by taking the time
derivative of \eqref{eq:first-order-nonholonomic-constraint} such
that $\dot{\mymatrix A}\dot{\myvec q}+\mymatrix A\ddot{\myvec q}=0$.
Therefore, $\myvec b=-\dot{\mymatrix A}\dot{\myvec q}$.

\section{Results \label{sec:Results}}

To assess the dual quaternion Newton-Euler formalism (dqNE) and the
Euler-Lagrange model obtained using the dual quaternion Gauss's Principle
of Least Constraint (dqGP), we performed simulations using three different
robots; namely, a fixed-base $50$-DoF serial manipulator, a $9$-DoF
holonomic mobile manipulator, and an $8$-DoF nonholonomic mobile
manipulator.

We implemented the simulations on the robot simulator ${\text{V-REP PRO EDU V3.6.2}}$\footnote{Available at: https://www.coppeliarobotics.com/}
using an interface with Matlab 2020a and the computational library
DQ Robotics \cite{AdornoDQRobotics2020} for dual quaternion algebra
on a computer running Ubuntu 18.04 LTS 64 bits equipped with a Intel
Core i7 6500U with 8GB RAM.

Furthermore, we present the computational costs of the proposed methods
and compare them with their respective classic counterparts.

\subsection{Simulation Setup}

The comparisons between the generalized accelerations obtained through
our proposed models and the values from V-REP were made considering
the coefficient of multiple correlation (CMC) \cite{Ferrari2010}
between the waveforms. The CMC provides a coefficient ranging between
zero and one that indicates how similar two given waveforms are. Identical
waveforms have CMC equal to one, whereas completely different waveforms
have CMC equal to zero.

The simulator does not allow the direct reading of accelerations.
Therefore, to obtain the configuration acceleration vector $\ddot{\myvec q}$,
we first read the velocity vector $\dot{\myvec q}\in\mathbb{R}^{n_{\ell}+3}$,
then filtered all elements $\dot{q}_{1},\ldots,\dot{q}_{n_{\ell}+3}$
with a discrete low-pass Butterworth filter, and used those values
to obtain the accelerations by means of numerical differentiation
based on a second forward finite difference approximation. We then
calculated the CMC between each element $\ddot{q}_{i}$, with $i\in\{1,\ldots,n_{\ell}+3\}$,
of the generalized acceleration waveform and its counterpart from
V-REP. Afterward, we used those CMCs to obtain the mean, minimum,
and maximum CMCs for the model, alongside their standard deviation.

Furthermore, for the simulation of the fixed-base $50$-DoF serial
manipulator, we also used the classic Newton-Euler algorithm (rtNE)
implemented on the Robotics Toolbox \cite{Corke2017}, and calculated
the CMC between the joint acceleration waveforms yielded by it and
the ones from V-REP.\footnote{In this case, we used the Vortex Studio engine (www.cm-labs.com) because
it presented better numerical stability for the 50-DoF manipulator.} The Robotics Toolbox is a widely used library whose accuracy has
been verified throughout the years. Therefore, it is an appropriate
baseline for the evaluation of the CMCs obtained by using our models.

\subsection{Results}

Table~\ref{tab:cmc} presents the CMC between the generalized acceleration
waveforms obtained through the different dynamic model strategies
(dqNE, dqGP, and rtNE) and the values obtained from V-REP.

The proposed dual quaternion Newton-Euler formalism does not allow
the inclusion of equality constraints into the model; therefore, it
was not applied to the dynamic modeling of the nonholonomic mobile
manipulator. Similarly, the current version of the Robotics Toolbox
only supports dynamic modeling of fixed-base robots and was only applied
to the $50$-DoF serial manipulator. The cases where the model could
not be obtained using the listed strategy are indicated in Table~\ref{tab:cmc}
by N/A. (i.e., not available). For all other cases, all models presented
mean ($\mathrm{mean}$) and minimum ($\mathrm{min}$) CMC close to
one, with small standard deviation ($\mathrm{std}$) and high maximum
$\left(\mathrm{max}\right)$ CMC; thus indicating high similarity
between the generalized acceleration waveform obtained from them and
the values from V-REP.

\begin{table*}[t]
\caption{CMC between the joint acceleration waveforms obtained through different
dynamic model strategies and the values obtained from V-REP. The closer
to one, the more similar the waveforms are.\label{tab:cmc}}

\centering{}{\small{}}%
\begin{tabular}{cc|cccc|cccc|cccc}
\hline 
 & \multicolumn{1}{c}{} & \multicolumn{4}{c}{{\small{}\hspace{1mm}50-DoF serial manipulator}} & \multicolumn{4}{c}{{\small{}9-DoF holonomic mobile manipulator}} & \multicolumn{4}{c}{{\small{}~~8-DoF nonholonomic mobile manipulator~~}}\tabularnewline
\hline 
{\small{}~~~~~Method~~~~~} &  & {\small{}\hspace{4mm}min\hspace{4mm}} & {\small{}\hspace{4mm}mean\hspace{4mm}} & {\small{}\hspace{4mm}std\hspace{4mm}} & {\small{}\hspace{4mm}max\hspace{4mm}} & {\small{}\hspace{4mm}min\hspace{4mm}} & {\small{}\hspace{4mm}mean\hspace{4mm}} & {\small{}\hspace{4mm}std\hspace{4mm}} & {\small{}\hspace{4mm}max\hspace{4mm}} & {\small{}\hspace{4mm}min\hspace{4mm}} & {\small{}\hspace{4mm}mean\hspace{4mm}} & {\small{}\hspace{4mm}std\hspace{4mm}} & {\small{}\hspace{4mm}max~~~~~}\tabularnewline
\hline 
\multicolumn{2}{c|}{{\small{}dqGP~~~}} & {\small{}$0.9044$} & {\small{}$0.9893$} & {\small{}$0.0182$} & {\small{}$0.9993$} & {\small{}$0.9934$} & {\small{}$0.9973$} & {\small{}$0.0026$} & {\small{}$0.9999$} & {\small{}$0.8860$} & {\small{}$0.9839$} & {\small{}$0.0368$} & {\small{}$0.9999$}\tabularnewline
\multicolumn{2}{c|}{{\small{}dqNE~~~}} & {\small{}$0.9044$} & {\small{}$0.9893$} & {\small{}$0.0182$} & {\small{}$0.9993$} & {\small{}$0.9938$} & {\small{}$0.9977$} & {\small{}$0.0022$} & {\small{}$0.9999$} & {\small{}N/A} & {\small{}N/A} & {\small{}N/A} & {\small{}N/A}\tabularnewline
\multicolumn{2}{c|}{{\small{}rtNE~~}} & {\small{}$0.9044$} & {\small{}$0.9893$} & {\small{}$0.0182$} & {\small{}$0.9993$} & {\small{}N/A} & {\small{}N/A} & {\small{}N/A} & {\small{}N/A} & {\small{}N/A} & {\small{}N/A} & {\small{}N/A} & {\small{}N/A}\tabularnewline
\hline 
\end{tabular}{\small\par}
\end{table*}

For the $50$-DoF serial manipulator, both the dqNE and the dqGP are
equivalent to the rtNE, which demonstrates the accuracy of our proposed
strategies when compared to the classic Newton-Euler approach.

For qualitative analysis, Fig.~\ref{fig:joint_acc_nonholonomic_jaco}
presents the generalized accelerations obtained using dqGP, alongside
the V-REP values, for the minimum, maximum, and intermediate CMCs
found during simulations. Even for the smallest value of CMC (i.e.,
0.8860), the accelerations obtained using our formulation match closely
the V-REP values. The small discrepancies arise from both discretization
effects and because the accelerations in V-REP are estimated from
noisy velocity values.

\begin{figure}[t]
\begin{centering}
{\Huge{}\resizebox{0.93\columnwidth}{!}{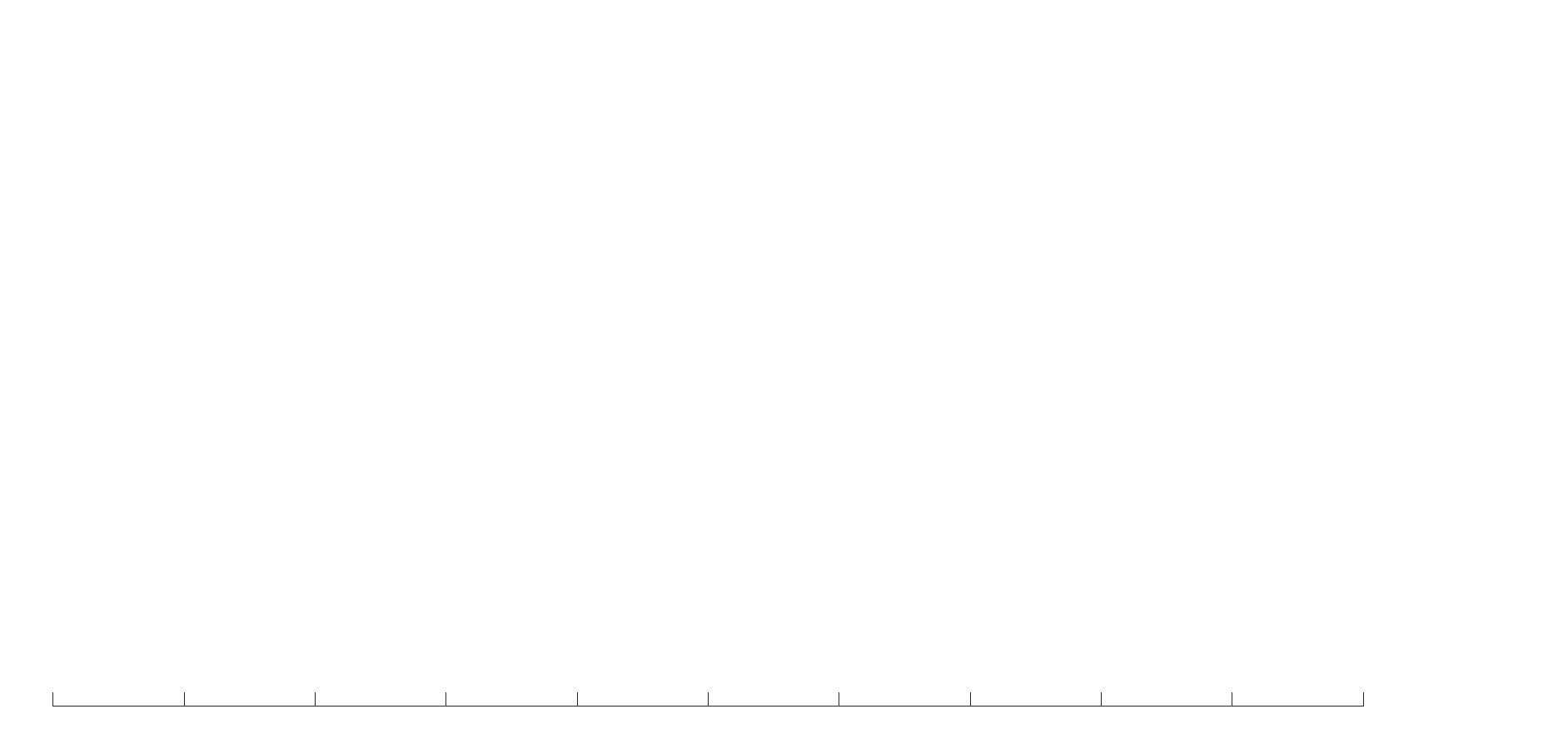}}{\Huge\par}
\par\end{centering}
\caption{Generalized acceleration waveforms of the 8-DoF nonholonomic mobile
manipulator. \emph{Solid} curves correspond to the V-REP values, whereas
\emph{dot-dashed} curves correspond to the values obtained using the
dqGP for the generalized acceleration waveforms of the first ($\text{CMC}=0.8860$),
ninth ($\text{CMC}=0.9999$), and fifth ($\text{CMC}=0.9922$) joints,
respectively.\label{fig:joint_acc_nonholonomic_jaco}}
\end{figure}

\subsection{Computational cost}

Here we compare the proposed methods with their classic counterparts
in terms of the number of multiplications and additions involved in
each technique. The results, considering an \emph{n}-DoF serial robot,
are summarized in Table~\ref{tab:summary_cost_table-1}. For the
classic Newton-Euler algorithm, we consider the version based on three
dimensional vectors proposed by Luh et al. \cite{Luh1980}, whose
mathematical cost was calculated by Balafoutis \cite{Balafoutis1994},
and is, to the best of our knowledge, one of the most efficient implementations
in the literature. Furthermore, for the classic Euler-Lagrange algorithm
we consider the version proposed by Hollerbach \cite{Hollerbach1980}.

\begin{table*}[t]
\noindent \centering{}\caption{Cost comparison between the proposed methods and their classic counterparts
for obtaining the dynamical model for an \emph{n}-DoF serial robot.
\label{tab:summary_cost_table-1}}
{\small{}}%
\begin{tabular*}{1\textwidth}{@{\extracolsep{\fill}}lcc}
\hline 
{\small{}Method} & {\small{}Mult.} & {\small{}Add.}\tabularnewline
\hline 
{\small{}\makecell[l]{Dual Quaternion Newton-Euler algorithm \\(cost
for arbitrary joints)}} & {\small{}$882n-48$} & {\small{}$724n-40$}\tabularnewline
{\small{}Classic Newton-Euler algorithm \cite{Balafoutis1994}} & {\small{}$150n-48$} & {\small{}$131n-48$}\tabularnewline
{\small{}\makecell[l]{Dual Quaternion Euler-Lagrange algorithm \\using
Gauss's Principle of Least Constraint}} & \multirow{1}{*}{{\small{}$4n^{3}+386n^{2}+401n$}} & \multirow{1}{*}{{\small{}$\frac{16}{3}n^{3}+326n^{2}+\frac{908}{3}n$}}\tabularnewline
{\small{}Classic Euler-Lagrange algorithm \cite{Hollerbach1980}} & \multirow{1}{*}{{\small{}$412n-277$}} & \multirow{1}{*}{{\small{}$320n-201$}}\tabularnewline
\hline 
\end{tabular*}
\end{table*}

The algorithm presented by Luh et al. \cite{Luh1980} costs less than
our Dual Quaternion Newton-Euler algorithm. The cost we presented
for our method is, however, fairly conservative and is given as an
upper bound. For instance, our calculations could be further optimized
by exploring the fact that several operations involve pure dual quaternions,
which have six elements instead of eight. Additionally, the cost presented
by Balafoutis \cite{Balafoutis1994} does not include the costs of
obtaining the robot kinematic model. Also, our method works for \emph{any}
type of joint and we have not optimized the calculations for any particular
type of joint, differently from Luh et al. \cite{Luh1980}, who only
consider prismatic and revolute joints, which are exploited to optimize
the computational cost. Nonetheless, both our algorithm and the one
of Luh et al. have linear costs in the number of DoF, with coefficients
of the same order of magnitude.

The Euler-Lagrange method based on the Gauss's Principle of Least
Constraint is, as expected, more expensive than the ones based on
the Newton-Euler and classic Euler-Lagrange formalism since it is
not based on recursive strategies. However, this strategy allows taking
into account additional constraints in the accelerations, which can
be exploited, for instance, in nonholonomic robotic systems. For those
cases, the Euler-Lagrange dynamic equation is given by \eqref{eq:FinalGaussUK}.

\section{Conclusions\label{sec:Conclusions}}

This work presented two strategies for the formulation of the dynamics
of mobile manipulators based on dual quaternion algebra. The first
one is based on the recursive Newton-Euler formulation, and uses twists
and wrenches instead of free vectors. This representation removes
the necessity of exhaustive geometrical analyses of the kinematic
chain since wrenches and twists are propagated through high-level
algebraic operations. Furthermore, our formulation works for any type
of joint because it takes into account arbitrary twists. Thus, our
strategy is more general than the work of Miranda et al. \cite{MirandadeFarias2019Journal},
which considered only manipulators with revolute joints.

The second proposed method is based on the Gauss's Principle of Least
Constraint and is also formulated based on twists and wrenches represented
using dual quaternion algebra in matrix form. This strategy allows
the incorporation of equality constraints directly in the optimization
formulation.

The cost comparison performed between the proposed methods and their
classic counterparts, in terms of number of multiplications and additions,
showed that the use of dual quaternions does not significatively increases
the cost of the Newton-Euler formalism, as the algorithm has linear
complexity on the number of rigid bodies in the kinematic chain. However,
the cost of obtaining the Euler-Lagrange model using the Gauss's Principle
of Least Constraint and dual quaternion algebra is higher than the
best classic Euler-Lagrange recursive solution found in the literature.
Notwithstanding, our method is far more general than its classic counterpart.
Also, we made no hard attempt, if any, to optimize our implementation
since we are currently more interested in the theoretical aspects
of the dynamic modeling using dual quaternion algebra than in ensuring
computational efficiency. In our current MATLAB implementation, the
dqNE and the dqGP take, in average, 23.17s and 8.73s to generate the
joints accelerations for a 50-DoF manipulator robot, respectively.
Those values are expected to decrease to around 99 ms and 37 ms in
a C++ implementation \cite{AdornoDQRobotics2020}, respectively.

Obtaining the Euler-Lagrange model through the Newton-Euler formalism
requires several executions of the algorithm. One execution to obtain
the gravitational vector, one to obtain the vector of Coriolis and
centrifugal terms and one for each row of the inertia matrix $\mymatrix M$.
For a 50-DoF manipulator robot, this results in $52$ executions of
the dqNE for each simulation step. For control applications, however,
we are usually interested in finding the joint torques,which requires
only one execution of the dqNE to generate each control input. Therefore,
using the dqNE to compute the joint torques for a 50-DoF manipulator
robot, the execution time is expected to reduce by a factor of $52$,
from around 99~ms to 1.9~ms in a C++ implementation.

Finally, we compared the joints accelerations obtained through the
proposed strategies for three different robots, with the values obtained
from ${\text{V-REP PRO EDU V3.6.2}}$, which is a realistic simulator.
The results showed that all of our methods are accurate for both fixed-base
serial manipulators and mobile manipulators.

Future works will focus on extending the dual quaternion Newton-Euler
algorithm to non-serial multibody systems (e.g., humanoids), and in
wrench control strategies. Concerning the Euler-Lagrange model obtained
using the Gauss's Principle of Least Constraint and dual quaternion
algebra, future works will be focused on exploiting inequality constraints
in the optimization formulation.

\section*{Acknowledgment}

This work was supported by the Brazilian agencies CAPES, CNPq (424011/2016-6
and 303901/2018-7), and by the INCT (National Institute of Science
and Technology) under the grant CNPq (Brazilian National Research
Council) 465755/2014-3.

We would like to thank our colleague Ana Christine de Oliveira for
providing us with the MATLAB implementation of the CMC used in Section~\ref{sec:Results}.

\bibliographystyle{asmems4}
\bibliography{AnalyticalDynamics,JMR-2019-Gauss,SumsofPowers,UK,Gibbs-Appell,gibbs_vs_appell,NumericalSolvers,Fierro_Lewis,JMR-2019}

\begin{thebibliography}{10}
\newcommand{\enquote}[1]{``#1''}
\providecommand{\url}[1]{\texttt{#1}}
\providecommand{\urlprefix}{URL }

\bibitem{Featherstone2008Book}
Featherstone, R., 2008.
\newblock {\em {Rigid body dynamics algorithms}}.
\newblock Springer.

\bibitem{Featherstone2010}
Featherstone, R., 2010.
\newblock ``{A Beginner's Guide to 6-D Vectors (Part 1)}''.
\newblock {\em IEEE Robotics {\&} Automation Magazine, \textbf{ 17}}(3), sep,
  pp.~83--94.

\bibitem{Featherstone2010a}
Featherstone, R., 2010.
\newblock ``{A Beginner's Guide to 6-D Vectors (Part 2)}''.
\newblock {\em IEEE Robotics {\&} Automation Magazine, \textbf{ 17}}(4), dec,
  pp.~88--99.

\bibitem{McCarthyBook1990}
McCarthy, J., 1990.
\newblock {\em {Introduction to Theoretical Kinematics}}, first edit~ed.
\newblock The MIT Press.

\bibitem{Dooley1991}
Dooley, J., and McCarthy, J., 1991.
\newblock ``{Spatial rigid body dynamics using dual quaternion components}''.
\newblock In Proceedings. 1991 IEEE International Conference on Robotics and
  Automation, no.~April, IEEE Comput. Soc. Press, pp.~90--95.

\bibitem{Perez2004}
Perez, A., and McCarthy, J., 2004.
\newblock ``{Dual Quaternion Synthesis of Constrained Robotic Systems}''.
\newblock {\em Journal of Mechanical Design, \textbf{ 126}}(3), p.~425.

\bibitem{Selig2004}
Selig, J.~M., 2004.
\newblock ``{Lie Groups and Lie Algebras in Robotics}''.
\newblock In {\em Computational Noncommutative Algebra and Applications}.
  Kluwer Academic Publishers, Dordrecht, pp.~101--125.

\bibitem{Selig2005}
Selig, J.~M., 2005.
\newblock {\em {Geometric Fundamentals of Robotics}}, 2nd~ed.
\newblock Monographs in Computer Science. Springer New York, New York, NY.

\bibitem{Selig2010}
Selig, J.~M., and Bayro-Corrochano, E., 2010.
\newblock ``{Rigid Body Dynamics Using Clifford Algebra}''.
\newblock {\em Advances in Applied Clifford Algebras, \textbf{ 20}}(1), mar,
  pp.~141--154.

\bibitem{Yuan1988}
Yuan, J., 1988.
\newblock ``{Closed-loop manipulator control using quaternion feedback}''.
\newblock {\em IEEE Journal on Robotics and Automation, \textbf{ 4}}(4),
  pp.~434--440.

\bibitem{Xian2004}
Xian, B., DeQueiroz, M., Dawson, D., and Walker, I., 2004.
\newblock ``{Task-Space Tracking Control of Robot Manipulators via Quaternion
  Feedback}''.
\newblock {\em IEEE Transactions on Robotics and Automation, \textbf{ 20}}(1),
  feb, pp.~160--167.

\bibitem{Adorno2011}
Adorno, B.~V., 2011.
\newblock ``{Two-arm Manipulation: From Manipulators to Enhanced Human-Robot
  Collaboration [Contribution {\`{a}} la manipulation {\`{a}} deux bras : des
  manipulateurs {\`{a}} la collaboration homme-robot]}''.
\newblock PhD thesis.

\bibitem{Gouasmi2012}
Gouasmi, M., 2012.
\newblock ``{Robot Kinematics using Dual Quaternions}''.
\newblock {\em IAES International Journal of Robotics and Automation (IJRA),
  \textbf{ 1}}(1), mar.

\bibitem{Cohen2016}
Cohen, A., and Shoham, M., 2016.
\newblock ``{Application of Hyper-Dual Numbers to Multibody Kinematics}''.
\newblock {\em Journal of Mechanisms and Robotics, \textbf{ 8}}(1), feb,
  pp.~2--5.

\bibitem{Ozgur2016}
{\"{O}}zg{\"{u}}r, E., and Mezouar, Y., 2016.
\newblock ``{Kinematic modeling and control of a robot arm using unit dual
  quaternions}''.
\newblock {\em Robotics and Autonomous Systems, \textbf{ 77}}, mar, pp.~66--73.

\bibitem{Kong2017}
Kong, X., 2017.
\newblock ``{Reconfiguration Analysis of Multimode Single-Loop Spatial
  Mechanisms Using Dual Quaternions}''.
\newblock {\em Journal of Mechanisms and Robotics, \textbf{ 9}}(5), oct,
  pp.~1--8.

\bibitem{Dantam2020}
Dantam, N.~T., 2020.
\newblock ``{Robust and efficient forward, differential, and inverse kinematics
  using dual quaternions}''.
\newblock {\em The International Journal of Robotics Research}, jul,
  p.~027836492093194.

\bibitem{Fonseca2020}
Fonseca, M. d. P.~A., Adorno, B.~V., and Fraisse, P., 2020.
\newblock ``{Coupled Task-Space Admittance Controller Using Dual Quaternion
  Logarithmic Mapping}''.
\newblock {\em IEEE Robotics and Automation Letters, \textbf{ 5}}(4), oct,
  pp.~6057--6064.

\bibitem{Marinho2019}
Marinho, M.~M., Adorno, B.~V., Harada, K., and Mitsuishi, M., 2019.
\newblock ``{Dynamic Active Constraints for Surgical Robots Using Vector-Field
  Inequalities}''.
\newblock {\em IEEE Transactions on Robotics, \textbf{ 35}}(5), oct,
  pp.~1166--1185.

\bibitem{Quiroz-Omana2019}
Quiroz-Omana, J.~J., and Adorno, B.~V., 2019.
\newblock ``{Whole-Body Control With (Self) Collision Avoidance Using Vector
  Field Inequalities}''.
\newblock {\em IEEE Robotics and Automation Letters, \textbf{ 4}}(4), oct,
  pp.~4048--4053.

\bibitem{Kussaba2017}
Kussaba, H.~T., Figueredo, L.~F., Ishihara, J.~Y., and Adorno, B.~V., 2017.
\newblock ``{Hybrid kinematic control for rigid body pose stabilization using
  dual quaternions}''.
\newblock {\em Journal of the Franklin Institute, \textbf{ 354}}(7), may,
  pp.~2769--2787.

\bibitem{Marinho2015}
Marinho, M.~M., Figueredo, L. F.~C., and Adorno, B.~V., 2015.
\newblock ``{A dual quaternion linear-quadratic optimal controller for
  trajectory tracking}''.
\newblock In 2015 IEEE/RSJ International Conference on Intelligent Robots and
  Systems (IROS), Vol.~2015-Decem, IEEE, pp.~4047--4052.

\bibitem{Savino2020}
Savino, H.~J., Pimenta, L.~C., Shah, J.~A., and Adorno, B.~V., 2020.
\newblock ``{Pose consensus based on dual quaternion algebra with application
  to decentralized formation control of mobile manipulators}''.
\newblock {\em Journal of the Franklin Institute, \textbf{ 357}}(1), jan,
  pp.~142--178.

\bibitem{Adorno2010}
Adorno, B.~V., Fraisse, P., and Druon, S., 2010.
\newblock ``{Dual position control strategies using the cooperative dual
  task-space framework}''.
\newblock In 2010 IEEE/RSJ International Conference on Intelligent Robots and
  Systems, IEEE, pp.~3955--3960.

\bibitem{Figueredo2014IROS}
Figueredo, L., Adorno, B., Ishihara, J., and Borges, G., 2014.
\newblock ``{Switching strategy for flexible task execution using the
  cooperative dual task-space framework}''.
\newblock In 2014 IEEE/RSJ International Conference on Intelligent Robots and
  Systems, IEEE, pp.~1703--1709.

\bibitem{Adorno2015}
Adorno, B.~V., B{\'{o}}, A.~P., and Fraisse, P., 2015.
\newblock ``{Kinematic modeling and control for human-robot cooperation
  considering different interaction roles}''.
\newblock {\em Robotica, \textbf{ 33}}(2), pp.~314--331.

\bibitem{Lana2015}
Lana, E.~P., Adorno, B.~V., and Maia, C.~A., 2015.
\newblock ``{A new algebraic approach for the description of robotic
  manipulation tasks}''.
\newblock In 2015 IEEE International Conference on Robotics and Automation
  (ICRA), Vol.~2015-June, IEEE, pp.~3083--3088.

\bibitem{Huang2015}
Huang, T., Yang, S., Wang, M., Sun, T., and Chetwynd, D.~G., 2015.
\newblock ``{An Approach to Determining the Unknown Twist/Wrench Subspaces of
  Lower Mobility Serial Kinematic Chains}''.
\newblock {\em Journal of Mechanisms and Robotics, \textbf{ 7}}(3), aug,
  pp.~1--9.

\bibitem{Renda2017}
Renda, F., Cianchetti, M., Abidi, H., Dias, J., and Seneviratne, L., 2017.
\newblock ``{Screw-Based Modeling of Soft Manipulators With Tendon and Fluidic
  Actuation}''.
\newblock {\em Journal of Mechanisms and Robotics, \textbf{ 9}}(4), aug.

\bibitem{Yang1964}
Yang, A.~T., and Freudenstein, F., 1964.
\newblock ``{Application of Dual-Number Quaternion Algebra to the Analysis of
  Spatial Mechanisms}''.
\newblock {\em Journal of Applied Mechanics, \textbf{ 31}}(2), p.~300.

\bibitem{Yang1966}
Yang, A.~T., 1966.
\newblock ``{Acceleration Analysis of Spatial Four-Link Mechanisms}''.
\newblock {\em Journal of Engineering for Industry, \textbf{ 88}}(3), p.~296.

\bibitem{Yang1967}
Yang, A.~T., 1967.
\newblock ``{Application of Dual Quaternions to the Study of Gyrodynamics}''.
\newblock {\em Journal of Engineering for Industry, \textbf{ 89}}(1), p.~137.

\bibitem{Yang1971}
Yang, A.~T., 1971.
\newblock ``{Inertia Force Analysis of Spatial Mechanisms}''.
\newblock {\em Journal of Engineering for Industry, \textbf{ 93}}(1), p.~27.

\bibitem{Pennock1983}
Pennock, G.~R., and Yang, A.~T., 1983.
\newblock ``{Dynamic Analysis of a Multi-Rigid-Body Open-Chain System}''.
\newblock {\em Journal of Mechanisms Transmissions and Automation in Design,
  \textbf{ 105}}(1), p.~28.

\bibitem{Shoham1993}
Shoham, M., and Brodsky, V., 1993.
\newblock ``{Analysis of Mechanisms by the Dual Inertia Operator}''.
\newblock In {\em Computational Kinematics}, J.~Angeles, G.~Hommel, and
  P.~Kov{\'{a}}cs, eds., Vol.~28 of {\em Solid Mechanics and Its Applications}.
  Springer Netherlands, Dordrecht, pp.~129--138.

\bibitem{Valverde2018a}
Valverde, A., and Tsiotras, P., 2018.
\newblock ``{Dual Quaternion Framework for Modeling of Spacecraft-Mounted
  Multibody Robotic Systems}''.
\newblock {\em Frontiers in Robotics and AI, \textbf{ 5}}(November), nov.

\bibitem{Valverde2018}
Valverde, A., and Tsiotras, P., 2018.
\newblock ``{Modeling of Spacecraft-Mounted Robot Dynamics and Control Using
  Dual Quaternions}''.
\newblock In 2018 Annual American Control Conference (ACC), IEEE, pp.~670--675.

\bibitem{Hachicho2000}
Hachicho, O., and Eldin, H.~N., 2000.
\newblock ``{Dual Hypercomplex Quaternions Based Recursions for Generalized
  Velocities, Accelerations and Forces in Robot Dynamics}''.
\newblock {\em System and Control: Theory and Applications}, pp.~85--89.

\bibitem{MirandadeFarias2019}
{Miranda de Farias}, C., {da Cruz Figueredo}, L.~F., and {Yoshiyuki Ishihara},
  J., 2019.
\newblock ``{Performance Study on dqRNEA - A Novel Dual Quaternion Based
  Recursive Newton-Euler Inverse Dynamics Algorithms}''.
\newblock {\em 2019 Third IEEE International Conference on Robotic Computing
  (IRC)}, pp.~94--101.

\bibitem{FirdausE.UdwadiaandRobertE.Kalaba1992}
Kalaba, F. E.~U., and E., R., 1992.
\newblock ``{A new perspective on constrained motion}''.
\newblock {\em Proceedings of the Royal Society of London. Series A:
  Mathematical and Physical Sciences, \textbf{ 439}}(1906), nov, pp.~407--410.

\bibitem{Hamilton1844}
Hamilton, W.~R., 1844.
\newblock ``{II. On quaternions; or on a new system of imaginaries in
  algebra}''.
\newblock {\em Philosophical Magazine Series 3, \textbf{ 25}}(163), jul,
  pp.~10--13.

\bibitem{Adorno2017}
Adorno, B.~V., 2017.
\newblock {Robot Kinematic Modeling and Control Based on Dual Quaternion
  Algebra -- Part I: Fundamentals}.

\bibitem{Kalaba1993}
Kalaba, R.~E., and Udwadia, F.~E., 1993.
\newblock ``{Equations of Motion for Nonholonomic, Constrained Dynamical
  Systems via Gauss's Principle}''.
\newblock {\em Journal of Applied Mechanics, \textbf{ 60}}(3), sep,
  pp.~662--668.

\bibitem{Bruyninckx2000a}
Bruyninckx, H., and Khatib, O., 2000.
\newblock ``{Gauss' principle and the dynamics of redundant and constrained
  manipulators}''.
\newblock In Proceedings 2000 ICRA. Millennium Conference. IEEE International
  Conference on Robotics and Automation. Symposia Proceedings (Cat.
  No.00CH37065), Vol.~3, IEEE, pp.~2563--2568.

\bibitem{Redon2002}
Redon, S., Kheddar, A., and Coquillart, S., 2002.
\newblock ``{Gauss' least constraints principle and rigid body simulations}''.
\newblock In Proceedings 2002 IEEE International Conference on Robotics and
  Automation (Cat. No.02CH37292), Vol.~1, IEEE, pp.~517--522.

\bibitem{Wieber2006}
Wieber, P.-B., 2006.
\newblock ``{Holonomy and Nonholonomy in the Dynamics of Articulated Motion}''.
\newblock In {\em Fast Motions in Biomechanics and Robotics}, Vol.~340.
  Springer Berlin Heidelberg, Berlin, Heidelberg, pp.~411--425.

\bibitem{Bouyarmane2012}
Bouyarmane, K., and Kheddar, A., 2012.
\newblock ``{On the dynamics modeling of free-floating-base articulated
  mechanisms and applications to humanoid whole-body dynamics and control}''.
\newblock In 2012 12th IEEE-RAS International Conference on Humanoid Robots
  (Humanoids 2012), IEEE, pp.~36--42.

\bibitem{spong2006robot}
Spong, M.~W., Hutchinson, S., and Vidyasagar, M., 2006.
\newblock {\em {Robot Modeling and Control}}.
\newblock Wiley, New York.

\bibitem{Kelly2005}
Kelly, R., Santibanez, V., and Loria, A., 2005.
\newblock {\em {Control of Robot Manipulators in Joint Space}}.
\newblock Advanced Textbooks in Control and Signal Processing. Springer-Verlag,
  London.

\bibitem{Storch1989}
Storch, J., and Gates, S., 1989.
\newblock ``{Motivating Kane's method for obtaining equations of motion for
  dynamic systems}''.
\newblock {\em Journal of Guidance, Control, and Dynamics, \textbf{ 12}}(4),
  jul, pp.~593--595.

\bibitem{Honein2021}
Honein, T.~E., and O'Reilly, O.~M., 2021.
\newblock ``{On the Gibbs-Appell Equations for the Dynamics of Rigid Bodies}''.
\newblock {\em Journal of Applied Mechanics, \textbf{ 88}}(7), jul, pp.~1--8.

\bibitem{Townsend1992}
Townsend, M.~A., 1992.
\newblock ``{Equivalence of Kane's, Gibbs-Appell's, and Lagrange's
  equations}''.
\newblock {\em Journal of Guidance, Control, and Dynamics, \textbf{ 15}}(5),
  sep, pp.~1289--1292.

\bibitem{Desloge1987_}
Desloge, E.~A., 1987.
\newblock ``{Relationship between Kane's equations and the Gibbs-Appell
  equations}''.
\newblock {\em Journal of Guidance, Control, and Dynamics, \textbf{ 10}}(1),
  jan, pp.~120--122.

\bibitem{Levinson87}
Levinson, D.~A., 1987.
\newblock ``{Comment on 'Relationship between Kane's equations and the
  Gibbs-Appell equations'}''.
\newblock {\em Journal of Guidance, Control, and Dynamics, \textbf{ 10}}(6),
  nov, pp.~593--593.

\bibitem{Ray1972}
Ray, J.~R., 1972.
\newblock ``{Nonholonomic Constraints and Gauss's Principle of Least
  Constraint}''.
\newblock {\em American Journal of Physics, \textbf{ 40}}(1), jan,
  pp.~179--183.

\bibitem{JRay92}
Ray, J.~R., 1992.
\newblock ``{Geometry of constraints and the Gauss-Appell principle of least
  con- straint}''.
\newblock {\em Kuwait Journal of Science, \textbf{ 19}}(1), aug, pp.~11--15.

\bibitem{Lewis1996}
Lewis, A.~D., 1996.
\newblock ``{The geometry of the Gibbs-Appell equations and Gauss' principle of
  least constraint}''.
\newblock {\em Reports on Mathematical Physics, \textbf{ 38}}(1), aug,
  pp.~11--28.

\bibitem{Udwadia1998}
Udwadia, F.~E., and Kalaba, R.~E., 1998.
\newblock ``{The explicit Gibbs-Appell equation and generalized inverse
  forms}''.
\newblock {\em Quarterly of Applied Mathematics, \textbf{ 56}}(2), jun,
  pp.~277--288.

\bibitem{Kane1983Dynamics}
Kane, T.~R., 1983.
\newblock ``{Formulation of dynamical equations of motion}''.
\newblock {\em American Journal of Physics, \textbf{ 51}}(11), nov,
  pp.~974--977.

\bibitem{Fierro1997_}
Fierro, R., and Lewis, F.~L., 1997.
\newblock ``{Control of a nonholomic mobile robot: Backstepping kinematics into
  dynamics}''.
\newblock {\em Journal of Robotic Systems, \textbf{ 14}}(3), mar, pp.~149--163.

\bibitem{AdornoDQRobotics2020}
Adorno, B.~V., and {Marques Marinho}, M., 2020.
\newblock ``{DQ Robotics: A Library for Robot Modeling and Control}''.
\newblock {\em IEEE Robotics {\&} Automation Magazine}.

\bibitem{Ferrari2010}
Ferrari, A., Cutti, A.~G., and Cappello, A., 2010.
\newblock ``{A new formulation of the coefficient of multiple correlation to
  assess the similarity of waveforms measured synchronously by different motion
  analysis protocols}''.
\newblock {\em Gait {\&} Posture, \textbf{ 31}}(4), apr, pp.~540--542.

\bibitem{Corke2017}
Corke, P.~I., 2017.
\newblock {\em {Robotics, Vision {\&} Control}}.
\newblock Springer.

\bibitem{Luh1980}
Luh, J. Y.~S., Walker, M.~W., and Paul, R. P.~C., 1980.
\newblock ``{On-Line Computational Scheme for Mechanical Manipulators}''.
\newblock {\em Journal of Dynamic Systems, Measurement, and Control, \textbf{
  102}}(2), jun, pp.~69--76.

\bibitem{Balafoutis1994}
Balafoutis, C.~A., 1994.
\newblock ``{A survey of efficient computational methods for manipulator
  inverse dynamics}''.
\newblock {\em Journal of Intelligent {\&} Robotic Systems, \textbf{ 9}}(1-2),
  feb, pp.~45--71.

\bibitem{Hollerbach1980}
Hollerbach, J.~M., 1980.
\newblock ``{A Recursive Lagrangian Formulation of Maniputator Dynamics and a
  Comparative Study of Dynamics Formulation Complexity}''.
\newblock {\em IEEE Transactions on Systems, Man, and Cybernetics, \textbf{
  10}}(11), pp.~730--736.

\bibitem{MirandadeFarias2019Journal}
{Miranda de Farias}, C., {da Cruz Figueredo}, L.~F., and {Yoshiyuki Ishihara},
  J., 2019.
\newblock ``{A Novel Dual Quaternion Based Cost Effcient Recursive Newton-Euler
  Inverse Dynamics Algorithm}''.
\newblock {\em International Journal of Robotic Computing, \textbf{ 1}}(2),
  pp.~144--168.

\end{thebibliography}

\end{document}